\documentclass[12pt,a4paper]{article}
\usepackage[top=25mm, bottom=25mm, left=15mm, right=15mm]{geometry}

\usepackage{authblk}
\usepackage{amsmath,amssymb,amsthm}
\usepackage{comment}
\usepackage{cite}
\usepackage{multirow}
\usepackage{url}
\usepackage{tikz}
\usetikzlibrary{positioning,arrows.meta}
\usepackage{lineno}

\title{Singularities of Non-negative Matrix Factorization and their application to Bayesian inference}
\author{Naoki Hayashi\thanks{E-mail: \texttt{Naoki.Hayashi.wm@mosk.tytlabs.co.jp}}}
\author{Yota Maeda\thanks{E-mail: \texttt{yota.maeda@mosk.tytlabs.co.jp}}}
\author{Yasushi Esaki\thanks{E-mail: \texttt{Yasushi.Esaki.sb@mosk.tytlabs.co.jp}}}
\affil{Toyota Central R\&D Labs., Inc.}
\date{2026/9/28}

\newtheorem{thm}{Theorem}[section]
\newtheorem{defi}{Definition}[section]
\newtheorem{lem}{Lemma}[section]
\newtheorem{conj}{Conjecture}[section]
\newtheorem{cor}{Corollary}[section]

\newtheorem{prop}{Proposition}[section]

\newcommand{\blockA}[1]{%
\begin{bmatrix}
#1 & O \\
O & O
\end{bmatrix}
}

\begin{document}
\maketitle

\begin{abstract}
Non-negative matrix factorization (NMF) is a singular statistical
model whose Bayesian asymptotics are governed by the real log
canonical threshold (RLCT).
We study the local geometry of the factorization map and derive
an upper bound for the RLCT of NMF.
Let $H$ be the model inner dimension and $H_0$ the non-negative
rank of the true $M\times N$ matrix.
Assuming that the true matrix admits a strictly positive
factorization of inner dimension $H_0$ in the interior of the
parameter domain, we prove, for smooth positive priors, that
$\lambda\leq
\{(H-H_0)\min(M,N)+H_0(M+N-H_0)\}/2$.
This bound strictly improves the previous bound when $H_0\geq3$.
The proof uses a local analytic normal form that separates
independent linear coordinates from a residual matrix product.
When $H=H_0$ also equals the ordinary rank of the true matrix,
we obtain the exact value $\lambda=H_0(M+N-H_0)/2$.
Under the standard assumptions of singular learning theory,
these results bound the leading coefficients of the expected
Bayesian generalization error and the Bayesian free energy.
\end{abstract}


\section{\label{sec_intro}Introduction}
A matrix whose entries are non-negative is called a non-negative matrix.
Non-negative matrix factorization (NMF) is a method for representing a non-negative matrix as the product of two non-negative matrices~\cite{Paatero, Lee}.
Representing a non-negative data matrix in this manner enables applications to knowledge discovery in the real world, and NMF has indeed been used in image recognition~\cite{Lee}, text mining~\cite{Xu}, signal processing~\cite{Virtanen2008bayesian}, bioinformatics~\cite{Kim}, consumer analysis~\cite{Kohjima}, and recommender systems~\cite{Bobadilla2018recommender}.
On the other hand, since there are infinitely many pairs of matrices whose products represent the same matrix, NMF based on objective-function optimization depends strongly on the initial values and often converges to a local solution.
One method for addressing this problem is Bayesian NMF~\cite{Cemgil}.
This method uses Gibbs sampling to compute the posterior distribution of the matrix parameters and is more robust than conventional iterative optimization methods because, among other advantages, it can assess fluctuations in the estimation results.
Because a Gibbs sampler can be derived, variational inference based on a mean-field approximation can also be formulated~\cite{Cemgil}.

A statistical model in which the map from the parameters to probability distributions is injective and the Fisher information matrix is positive definite is called a regular model.
In contrast, in statistical models such as neural networks, reduced-rank regression, normal mixtures, latent Dirichlet allocation, and Markov models, the map from the parameters to probability distributions cannot be injective because these models have hierarchical structures or hidden variables.
Such models are called singular models.
Most statistical models used in machine learning are singular models~\cite{Watanabe2007almost}, and NMF described above is also a singular model.
Their likelihoods and posterior distributions cannot be approximated by normal distributions~\cite{SWatanabeBookE, SWatanabeBookMath},
and Bayesian inference is known to be more effective for singular models than maximum likelihood estimation or maximum a posteriori estimation in the sense that it can yield a smaller generalization loss~\cite{SWatanabeBookE, SWatanabeBookMath}.
Determining the theoretical value of the Bayesian generalization loss is important for designing models and hyperparameters and estimating the required sample size, and several models have been studied to date, for example, neural networks whose activations are analytic-odd~\cite{Watanabe2} and linear~\cite{Aoyagi1}, mixture models such as normal~\cite{Yamazaki1} and Bernoulli~\cite{Yamazaki2013comparing}, and latent variable models including latent Dirichlet allocation~\cite{nhayashi9} and Markov models~\cite{Zwiernik2011asymptotic}.

As mentioned above, Bayesian inference methods for NMF have been proposed.
Their Bayesian generalization errors and variational approximation errors have also been studied~\cite{nhayashi2, nhayashi5, nhayashi8}.
These previous studies analyze the real log canonical threshold (RLCT), which describes the leading term in the asymptotic behavior of the expected Bayesian generalization error.
However, a tighter evaluation remains conjectural and involves the non-negative rank~\cite{Cohen}, the minimum inner dimension of a matrix product satisfying the non-negativity constraints~\cite{nhayashi5}.
In this study, we derive a tighter upper bound for the RLCT of NMF than previous bounds by analyzing singularities.
This tighter upper bound not only provides a more accurate characterization of the asymptotic behavior of the Bayesian generalization error but also improves implementations of RLCT-based model selection methods~\cite{Drton} and posterior computation methods~\cite{Nagata2008asymptotic} by providing a closer substitute for the exact RLCT.

The rest of the paper is organized as follows.
Section~\ref{sec_bayesframe} introduces the Bayesian inference framework.
Section~\ref{sec_main} states the Main Theorem, which is proved in Section~\ref{sec_proof}.
Section~\ref{sec_discuss} discusses the Main Theorem, and Section~\ref{sec_conc} concludes the paper.

%
%

\section{\label{sec_bayesframe}Bayesian inference framework}
Suppose that a dataset of $n$ independent random variables $D_n=(X_1, \ldots, X_n)$ is given, and consider performing Bayesian inference using a statistical model $p(x|w)$.
Here, for each $i=1,\ldots,n$, $X_i$ is assumed to follow the true distribution $q(x)$.
Let $\mathcal{W} \subset \mathbb{R}^d$ be the parameter set and let $w \in \mathcal{W}$.
Also, let $\varphi(w)$ denote the prior distribution.
The marginal likelihood and the posterior distribution are denoted by $Z_n$ and $\varphi^*(w|D_n)$, respectively.
The definition of conditional probability gives
\begin{align}
\varphi^*(w|D_n) &= \frac{\prod_{i=1}^n p(X_i|w) \varphi(w)}{Z_n}, \\
Z_n &= \int \prod_{i=1}^n p(X_i|w) \varphi(w) dw.
\end{align}
Define the probability distribution $p^*(x|D_n)$ as the posterior expectation of the statistical model:
\begin{align}
p^*(x|D_n) = \int p(x|w)\varphi^*(w|D_n) dw.
\end{align}
This distribution is called the Bayesian predictive distribution.

Bayesian inference is performed from a given pair $(p,\varphi)$ and dataset $D_n$;
however, it is necessary to evaluate whether $(p,\varphi)$ is appropriate for the true distribution $q(x)$.
To assess the accuracy of the inference, typical evaluation criteria are the free energy and the generalization error~\cite{SWatanabeBookMath, StatRethinkMcElreath2nd}.
The free energy is defined in terms of the negative log marginal likelihood by
\begin{align}
F_n = -\log Z_n
\end{align}
and the Bayesian generalization error is defined as
\begin{align}
G_n &=\int q(x)\log \frac{q(x)}{p^*(x|D_n)}dx.
\end{align}
The free energy and Bayesian generalization error, respectively, represent the accuracy of inference for the data-generating process through the marginal likelihood and the accuracy of prediction for unseen data through the Bayesian predictive distribution.
It has been proved that these quantities have the following asymptotic expansions:
\begin{align}
F_n &= nS_n + \lambda \log n + O_p(\log \log n), \label{eq_asympF} \\
\mathbb{E}[G_n] &= \frac{\lambda}{n}  + o\left( \frac{1}{n} \right), \label{eq_asympG}
\end{align}
where $\mathbb{E}[\cdot]$ denotes the expectation with respect to the true distribution of the dataset $D_n$ and $S_n$ is the empirical entropy of $q(x)$, 
and $\lambda$ is the RLCT determined by $(q,p,\varphi)$~\cite{SWatanabeBookE, SWatanabeBookMath}.
Thus, the RLCT dominates the free energy and the generalization error through the coefficient of the leading terms.
Clarifying the theoretical free energy and the generalization error is effective for tuning hyperparameters and estimating sufficient sample size.
Besides, model selection methods that directly use the theoretical value of the RLCT~\cite{Drton} and a design method for the exchange Monte Carlo method~\cite{Nagata2008asymptotic} have been proposed; thus, determining RLCTs is important not only from theoretical but also engineering perspectives.


\section{\label{sec_main}Main Theorem}

We state the theorem proved in this study.
In this paper, we suppose the following assumptions to develop the determination of RLCT~\cite{SWatanabeBookE, SWatanabeBookMath, Yamazaki1, Zwiernik2011asymptotic, nhayashi9}.
\begin{itemize}
    \item (A1) The parameter set is a semianalytic and compact subset in Euclidean space, and its interior is not empty.
    \item (A2) The prior $\varphi(w)$ can be simultaneously resolved with $K(w)=\mathrm{KL}(q \mid p)$ and is locally integrable.
    \item (A3) There exists a non-negative analytic function $H(w)$ such that $K(w)$ is equivalent to  $H(w)$. 
\end{itemize}
Note that the equivalence between $K(w)$ and $H(w)$ is defined by the existence of positive constants $c_1$ and $c_2$ such that $c_1 H(w) \leqq K(w) \leqq c_2H(w)$.
The above assumptions have been implicitly assumed in previous research to find RLCTs of singular models.
Thus, based on the assumptions in our previous studies~\cite{nhayashi2, nhayashi5, nhayashi8}, we define the relevant concepts as follows.

Let $\mathrm{M}(M,N,\mathcal{C})$ denote the set of $M \times N$ real matrices whose entries belong to $\mathcal{C} \subset \mathbb{R}$, and let $\mathrm{Hom}(M,N)$ denote the set of linear maps from $\mathbb{R}^M$ to $\mathbb{R}^N$.
In particular, we abbreviate $\mathrm{M}(M,N,\mathbb{R})$ as $\mathrm{M}(M,N)$.
Let $\mathrm{GL}(r)$ be the set of invertible real matrices of degree $r$, which we identify with the general linear group of degree $r$.
$\mathcal{K}$ denotes a compact and semianalytic subset of the non-negative real numbers that contains 0 and whose interior is not empty.
Let $\mathcal{K}_0$ be a compact subset of the positive real numbers.
Set $\Theta = \mathrm{M}(M,H,\mathcal{K}) \times \mathrm{M}(H,N,\mathcal{K})$.
Take matrices $U \in \mathrm{M}(M,H,\mathcal{K})$ and $V \in \mathrm{M}(H,N,\mathcal{K})$, and write them as
\begin{align}
U&=(u_1,\ldots,u_H)=(u_{ik})_{i=1,k=1}^{M,H}, \\
V&=(v_1,\ldots,v_H)^\top=(v_{kj})_{k=1,j=1}^{H,N}.
\end{align}

Furthermore, take matrices $U_0 \in \mathrm{M}(M,H_0,\mathcal{K}_0)$ and $V_0 \in \mathrm{M}(H_0,N,\mathcal{K}_0)$ such that the non-negative rank of $U_0V_0$ is $H_0$.
It satisfies $H_0 \leqq \min\{ M,N\}$.
We write them as
\begin{align}
U_0&=(u^0_1,\ldots,u^0_{H_0})=(u^0_{ik})_{i=1,k=1}^{M,H_0}, \\
V_0&=(v^0_1,\ldots,v^0_{H_0})^\top=(v^0_{kj})_{k=1,j=1}^{H_0,N}.
\end{align}
We also denote the $M \times N$ zero matrix by $O_{M,N}$ and abbreviate it as $O$ when its size is clear, for example, within a block matrix.

Let $H \geqq H_0$,
define the map $F: \Theta \to \mathrm{M}(M,N)$ by $F(U,V)=UV-U_0V_0$, and
define the non-negative analytic function $\Phi: \Theta \to \mathbb{R}$ by $\Phi(U,V)=\lVert UV-U_0V_0 \rVert^2$~\footnote{NMF models having the same RLCT as this function include those based on the normal, Poisson, and exponential distributions. It can be proved that these correspond to NMF with the squared error~\cite{Paatero, Lee}, I-divergence~\cite{Finesso}, and Itakura--Saito divergence~\cite{Itakura}, respectively, as their loss functions~\cite{nhayashi2}.},
where the matrix norm is the Frobenius norm.
We also assume that $\Theta_0=\Phi^{-1}(0)\cap \Theta$ is nonempty.

\begin{defi}[RLCT of NMF~\cite{nhayashi2, nhayashi5, nhayashi8}]
\label{defi_nmf_rlct}
Let $\varphi(U,V)$ be a smooth prior distribution.
Under appropriate assumptions,
the univariate complex function
\begin{align}
\zeta(z)=\iint_\Theta dUdV \Phi(U,V)^z \varphi(U,V)
\end{align}
is holomorphic for $\mathrm{Re}(z)>0$ and admits a unique analytic continuation to the entire complex plane as a meromorphic function, all of whose poles are negative rational numbers{\rm~\cite{SWatanabeBookE, SWatanabeBookMath}}.
When the largest pole is $(-\lambda)$,
$\lambda$ is called the RLCT of {\rm NMF}.
\end{defi}

The Main Theorem of this study is as follows.

\begin{thm}[Main Theorem]\label{thm_main}
If the prior distribution is smooth, positive and bounded on $\Phi^{-1}(0)$,
then the RLCT $\lambda$ of {\rm NMF} satisfies the following inequality:
\begin{align}
\lambda \leqq \frac{1}{2}\left\{
(H-H_0)\min\{M,N\} +H_0(M+N-H_0)
\right\}.
\end{align}
If $H=H_0=r$, then
\[
  \lambda=\frac{H_0(M+N-H_0)}{2}.
\]
\end{thm}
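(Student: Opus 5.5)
The plan is to bound $\lambda$ from above by a \emph{local} RLCT at a carefully chosen point of $\Theta_0$. The basic tool is the following. Suppose $\Theta_0$ contains a real analytic submanifold $S$ of dimension $k$ through a point $w^*$. Suppose also that near $w^*$ the domain $\Theta$ contains a one-sided (orthant-type) neighbourhood of $S$ in the normal directions. Since $\Phi=\lVert F\rVert^2$ with $F$ analytic and $F|_S=0$, we get $\Phi(w)\leqq C\,\mathrm{dist}(w,S)^2$ near $w^*$. For real $z<0$ close to the poles this gives $\Phi^z\geqq C^z\,\mathrm{dist}(w,S)^{2z}$, because $\Phi$ is small there. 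The integral of $\mathrm{dist}^{2z}$ over a tubular (or half-tubular) neighbourhood diverges at $z=-(d-k)/2$, where $d=H(M+N)$ is the number of parameters. The prior is positive and bounded near $w^*$, so $\zeta$ has a pole at some $z\geqq-(d-k)/2$, i.e. $\lambda\leqq (d-k)/2$. Restricting to orthants only changes constants, which is why boundary points of $\mathcal{K}$ are admissible.

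\textbf{Choice of $w^*$ and $S$.} Split $U=(U_1,U_2)$ and $V=(V_1;V_2)$ into the first $H_0$ and the last $H-H_0$ columns/rows. Suppose $M\leqq N$; otherwise exchange the roles of $U$ and $V$. Set $U_2=O$ and let $V_2$ range over an open set of matrices with entries in the interior of $\mathcal{K}$. Then $U_2V_2=O$, and the family $\{U_2=O\}$ has codimension $M(H-H_0)=(H-H_0)\min\{M,N\}$ in the $(U_2,V_2)$-block. For the $(U_1,V_1)$-block take the orbit $\{(U_0g,\,g^{-1}V_0): g\in\mathrm{GL}(H_0),\ g \text{ near } I\}$. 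Because $U_0,V_0$ have strictly positive entries in $\mathcal{K}_0$, this orbit stays inside $\Theta$, and every point of it has $U_1V_1=U_0V_0$. If the orbit has dimension $H_0^2$, then $S=\{\text{orbit}\}\times\{U_2=O\}\times\{V_2\}$ has codimension $H_0(M+N)-H_0^2+M(H-H_0)$, and the criterion above yields exactly the claimed bound.

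\textbf{Main obstacle.} The orbit has full dimension $H_0^2$ iff the stabiliser algebra $\{X: U_0X=O,\ XV_0=O\}$ is trivial. This holds when $\mathrm{rank}\,U_0=\mathrm{rank}\,V_0=H_0$, but the non-negative rank may exceed the ordinary rank. In the degenerate case I would enlarge $S$ by additional fibre directions, namely $(U_0+A,V_0)$ with $AV_0=O$ and $(U_0,V_0+B)$ with $U_0B=O$, which stay non-negative for small $A,B$ since $U_0,V_0$ are positive. I would then check by a dimension count that the fibre of $(U_1,V_1)\mapsto U_1V_1$ through $(U_0,V_0)$ still contains a smooth piece of dimension at least $H_0^2$. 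This is the step I expect to need the most care. The fallback is to pick a nearby smooth point of this real analytic fibre inside the positive orthant, where the local dimension is at least $H_0(M+N)-\dim\{\text{rank}\leqq H_0\}=H_0^2$.

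\textbf{Exact value when $H=H_0=r$.} Now $U_0V_0$ has ordinary rank $H_0$, so every zero $(U,V)\in\Theta_0$ has $\mathrm{rank}\,U=\mathrm{rank}\,V=H_0$. At such a point the product map is a submersion onto the manifold of rank-$H_0$ matrices, which has dimension $H_0(M+N-H_0)$. Choosing local coordinates on that manifold, $\Phi$ becomes a positive unit times a sum of squares of $H_0(M+N-H_0)$ independent analytic coordinates, with the remaining $H_0^2$ coordinates free. Hence the local RLCT at every point of $\Theta_0$, including boundary points after orthant restriction, equals $H_0(M+N-H_0)/2$. Since $\lambda$ is the minimum of the local RLCTs over the compact set $\Theta_0$, we obtain $\lambda=H_0(M+N-H_0)/2$, and this matches the upper bound.
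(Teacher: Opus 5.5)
Your codimension criterion is sound and is a genuinely different route from the paper's. However, the proposal leaves open exactly the case that separates NMF from ordinary matrix factorization.

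\textbf{The gap.} When $H_0>r$, nothing in the hypotheses prevents $a=\mathrm{rank}\,U_0<H_0$ and $b=\mathrm{rank}\,V_0<H_0$ at the same time. Then $\{X: U_0X=O,\ XV_0=O\}$ has dimension $(H_0-a)(H_0-b)>0$, so the orbit has dimension only $H_0^2-(H_0-a)(H_0-b)$. Your $S$ then gives a bound that is too large by $(H_0-a)(H_0-b)/2$.

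You defer the repair to a dimension count that is never carried out, and the fallback does not work. The inequality ``local dimension of a fibre $\geqq$ dimension of the source minus dimension of the image'' holds for complex algebraic morphisms, not for real points. For example, $(x,y)\mapsto x^2+y^2$ has a one-dimensional image, yet its fibre over $0$ is a single point. Moving to a nearby smooth point of the real fibre does not supply the missing lower bound. You also cannot simply combine the two kinds of fibre directions you list, since $(U_0+A)(V_0+B)=W_0+AB$.

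\textbf{How to close it.} Enlarge on one side only. Write $V_0=C_0B$ with $B\in\mathrm{M}(b,N)$ of full row rank and $C_0\in\mathrm{M}(H_0,b)$ of rank $b$. For $V_1=CB$ with $C$ near $C_0$, the condition $U_1V_1=W_0$ is equivalent to $U_1C=U_0C_0$, and $U_1\mapsto U_1C$ is surjective onto $\mathrm{M}(M,b)$. Hence $\{(U_1,CB): U_1C=U_0C_0\}$ is a smooth submanifold of the fibre through the interior point $(U_0,V_0)$. Its dimension is $M(H_0-b)+H_0b\geqq H_0^2$, because $M\geqq H_0$. With this in place, your criterion yields the Main Theorem's bound without appealing to Theorem~\ref{thm_aoyagi}, which is a real simplification.

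\textbf{Comparison with the paper.} The paper instead builds the normal form $\langle z_1,\ldots,z_R,(XY)_{\mu\nu}\rangle$ at a minimizing zero. It computes the exact local value $R/2+\tilde{\lambda}_{\mathrm{MF}}(M-a,N-b,c,0)$ and then bounds it algebraically. That costs more, but the exact value is what the Discussion relies on: the refined bound when $a,b$ are known, and Proposition~\ref{prop_nmf_vs_mf_ranksmall}. A codimension count cannot deliver it. For general $H$, your single product submanifold replaces the paper's $\Phi\leqq 2(\overline{\Phi}_1+\overline{\Phi}_2)$ together with additivity and Lemma~\ref{lem_null}; both work.

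\textbf{Exact-value part.} At boundary zeros you can only claim, and only need, that the restricted local RLCT is $\geqq H_0(M+N-H_0)/2$, since shrinking the domain cannot lower it. The value itself is attained at the interior point $(U_0,V_0)$.
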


The proof is given in the next section.
By the Main Theorem, for cases in which $\mathrm{KL}(q \mid p)$ is equivalent to $\Phi(U,V)$, that is, there exist constants $c_1>0$ and $c_2>0$ such that $c_1 \Phi(U,V) \leqq \mathrm{KL}(q \mid p) \leqq c_2 \Phi(U,V)$, the Bayesian generalization error and Bayesian free energy can be evaluated as follows.

\begin{cor}[Bayesian generalization error and Bayesian free energy for NMF]\label{cor_nmf}
Suppose that $\mathrm{KL}(q \mid p)$ is equivalent to $\Phi(U,V)$ and the same assumptions as Theorem{\rm~\ref{thm_main}} hold.
The Bayesian generalization error $G_n$ and Bayesian free energy $F_n$ satisfy the following inequalities:
\begin{align}
\mathbb{E}[G_n] &\leqq \frac{1}{2n}\left\{
(H-H_0)\min\{M,N\} +H_0(M+N-H_0)
\right\} + o\left( \frac{1}{n} \right), \\
F_n &\leqq nS_n + \frac{1}{2}\left\{
(H-H_0)\min\{M,N\} +H_0(M+N-H_0)
\right\} \log n \nonumber \\
&\quad + O_p\left( \log \log n \right),
\end{align}
where $S_n$ is the empirical entropy of the true distribution $q(x)$.
\end{cor}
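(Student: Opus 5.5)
The plan is to treat Corollary~\ref{cor_nmf} as a direct consequence of Theorem~\ref{thm_main} combined with the general asymptotic expansions \eqref{eq_asympF} and \eqref{eq_asympG}. The only genuine issue is to identify the RLCT of the Kullback--Leibler divergence $K(U,V)=\mathrm{KL}(q\mid p)$, which governs $G_n$ and $F_n$, with the RLCT of $\Phi(U,V)$ defined in Definition~\ref{defi_nmf_rlct}.

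\emph{Step 1: equivalence preserves the RLCT.} Let $c_1\Phi\leqq K\leqq c_2\Phi$ on $\Theta$. For real $z>0$ this gives
\[
c_1^{z}\,\zeta_\Phi(z)\leqq \zeta_K(z):=\iint_\Theta K(U,V)^z\varphi(U,V)\,dUdV\leqq c_2^{z}\,\zeta_\Phi(z).
\]
I would pass from the zeta functions to the volume functions $V_f(t)=\int_{f<t}\varphi$. The largest pole $-\lambda$ of $\zeta_f$ is characterized by $V_f(t)\asymp t^{\lambda}(-\log t)^{m-1}$ as $t\to0$. The inclusions $\{\Phi<t/c_2\}\subset\{K<t\}\subset\{\Phi<t/c_1\}$ then sandwich $V_K$ between rescalings of $V_\Phi$, so the exponent is the same and $\lambda_K=\lambda_\Phi$. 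Since this argument uses only real $t$, it sidesteps any need to compare meromorphic continuations. Assumption (A3) guarantees that $K$ is equivalent to an analytic function, which is what makes the RLCT of $K$ well defined here.

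\emph{Step 2: apply the asymptotic theory.} Under (A1)--(A3), with the smooth prior that is positive and bounded on $\Phi^{-1}(0)$, the cited results of Watanabe give \eqref{eq_asympF} and \eqref{eq_asympG} with $\lambda=\lambda_K$. By Step 1, $\lambda_K=\lambda_\Phi$. By Theorem~\ref{thm_main},
\[
\lambda_\Phi\leqq \frac{1}{2}\bigl\{(H-H_0)\min\{M,N\}+H_0(M+N-H_0)\bigr\}.
\]
Substituting this bound into $\mathbb{E}[G_n]=\lambda/n+o(1/n)$ yields the first inequality. Substituting it into $F_n=nS_n+\lambda\log n+O_p(\log\log n)$ yields the second, because $\log n>0$ for $n\geqq2$, so replacing $\lambda$ by a larger constant can only increase the right-hand side. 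The $O_p(\log\log n)$ remainder is kept as is.

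The main obstacle is Step 1, specifically justifying that the RLCT is read off from the small-$t$ behaviour of $V_f(t)$ for a merely equivalent, and possibly non-analytic, $K$. I would handle it by working with the analytic representative $H(w)$ from (A3): $H$ is itself equivalent to $\Phi$, so the sandwich argument of Step 1 applies between the two analytic functions $H$ and $\Phi$. The asymptotic theory is then applied to $K$ through $H$, exactly as in Watanabe's framework.
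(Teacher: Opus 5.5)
Your proposal is correct and takes the same route as the paper, which proves the corollary simply by substituting the Main Theorem's bound for $\lambda$ into \eqref{eq_asympF} and \eqref{eq_asympG}. Your Step 1 spells out what the paper leaves implicit: under $c_1\Phi\leqq \mathrm{KL}(q\mid p)\leqq c_2\Phi$, the RLCT of the Kullback--Leibler divergence equals that of $\Phi$, and your sandwich of volume functions (passing through the analytic representative from (A3)) is a valid justification of this.
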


Although NMF sometimes considers the factorization of only one non-negative matrix, it generally deals with multiple data matrices.
This is because, as in purchase analysis, matrix data may be obtained on a daily or monthly basis or from different locations~\cite{Takeuchi2013NM2F, Kohjima}.
In such cases, the factorization of multiple matrices can be reduced to statistical inference, making it possible to statistically evaluate the generalization error and free energy of Bayesian inference as above.

In Bayesian inference for NMF, a Poisson--gamma model~\cite{Cemgil} is often used. In this case, however, the prior distribution may have zeros and hence does not satisfy the assumption of Theorem \ref{thm_main}.
Even in this case, the main result of this study can be used to derive a tighter upper bound for the RLCT than those in previous studies.
Define the gamma distributions used as the prior distribution~$\varphi_{\mathrm{Gam}}(U,V)$ for the matrix parameters $U$ and $V$ by
\begin{align}
\mathrm{Gam}(U|\phi_U, \theta_U) &= \prod_{i=1}^M \prod_{k=1}^H \frac{\theta_U^{\phi_U}}{\Gamma(\phi_U)}u_{ik}^{\phi_U-1}e^{-\theta_U u_{ik}}, \\
\mathrm{Gam}(V|\phi_V, \theta_V) &= \prod_{k=1}^H \prod_{j=1}^N \frac{\theta_V^{\phi_V}}{\Gamma(\phi_V)}v_{kj}^{\phi_V-1}e^{-\theta_V v_{kj}}, \\
\varphi_{\mathrm{Gam}}(U,V)&=\mathrm{Gam}(U|\phi_U, \theta_U)\mathrm{Gam}(V|\phi_V, \theta_V),
\end{align}
where the hyperparameters $\phi_U,\theta_U,\phi_V,\theta_V$ are all positive real numbers.
The domain of this prior can be restricted to a compact set $\Theta$ that includes $\Theta_0$~\cite{nhayashi8}.

\begin{cor}[Corollary of the Main Theorem]\label{cor_hyper}
If the prior is $\varphi(U,V)=\varphi_{\mathrm{Gam}}(U,V)$, then the RLCT $\lambda$ of {\rm NMF} satisfies the following inequality:
\begin{align}
\lambda \leqq \frac{1}{2}\left\{
(H-H_0)\min\{M\phi_U,N\phi_V\} +H_0(M+N-H_0)
\right\}.
\end{align}
\end{cor}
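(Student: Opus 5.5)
The statement to prove is Corollary~\ref{cor_hyper}. I would reduce it to the local computation behind Theorem~\ref{thm_main}, keeping track of how the gamma prior vanishes on the boundary. The RLCT is the minimum over points of $\Theta_0$ of the local RLCT of the pair $(\Phi,\varphi_{\mathrm{Gam}})$. So an upper bound follows from exhibiting one point $w^*\in\Theta_0$ and a neighborhood $W$ of it on which the local zeta function has a pole at least as large as $-\lambda^*$, with $\lambda^*$ the claimed bound.

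As in the proof of Theorem~\ref{thm_main}, I would choose
\[
w^*=\bigl((U_0,O_{M,H-H_0}),\,(V_0^\top,O_{N,H-H_0})^\top\bigr),
\]
that is, the true strictly positive factors padded with $H-H_0$ zero columns of $U$ and zero rows of $V$. Near $w^*$, the first $H_0$ columns and rows lie in the interior of $\mathcal{K}$. There the gamma density is smooth and positive, so these coordinates behave exactly as in the smooth-prior case.

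Write $U=(U_1,U_2)$ and $V=(V_1^\top,V_2^\top)^\top$. Then
\[
UV-U_0V_0=(U_1V_1-U_0V_0)+U_2V_2 .
\]
The local normal form of Section~\ref{sec_proof} turns the first part into $H_0(M+N-H_0)$ independent linear coordinates, which contribute $H_0(M+N-H_0)/2$. The redundant block contributes a residual term $\lVert U_2V_2\rVert^2$, possibly perturbed. Near the origin, the prior density on this block is comparable to $\prod u_{ik}^{\phi_U-1}\prod v_{kj}^{\phi_V-1}$.

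It then suffices to show that the RLCT of $\lVert U_2V_2\rVert^2$ with this monomial weight, on the non-negative orthant, is at most $(H-H_0)\min\{M\phi_U,N\phi_V\}/2$. Since the remaining columns are independent, each column/row pair $(u_k,v_k)$ contributes additively at most $\min\{M\phi_U,N\phi_V\}/2$. To get this bound I would use two test regions:
\begin{itemize}
\item Restrict to $v_k$ small. Because $\lVert u_kv_k^\top\rVert^2\le \lVert u_k\rVert^2\lVert v_k\rVert^2$, with $u_k$ free of order one, the weighted volume of $\{\lVert v_k\rVert^2<t\}$ is about $t^{N\phi_V/2}$. This gives $\lambda\le N\phi_V/2$ by the standard volume characterization $V(t)\sim t^{\lambda}$.
\item Symmetrically, restricting to $u_k$ small gives $M\phi_U/2$.
\end{itemize}
Taking the better of the two for every $k$ yields $(H-H_0)\min\{M\phi_U,N\phi_V\}/2$. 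Combining this with the interior part via a product-volume estimate, which is valid because the linear coordinates are independent, gives the claim.

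The main obstacle is making the separation rigorous. The normal form of Theorem~\ref{thm_main} mixes the redundant variables into the linear coordinates through an analytic change of variables, so I must check that the gamma weight is carried to something still comparable to $\prod u_{ik}^{\phi_U-1}\prod v_{kj}^{\phi_V-1}$ on the $U_2,V_2$ block. I would avoid this issue by not using the full normal form. Instead I would bound $\Phi$ from above by
\[
2\lVert U_1V_1-U_0V_0\rVert^2+2\lVert U_2V_2\rVert^2 .
\]
Since $\Phi\le c\,\Psi$ implies $\lambda(\Phi)\le\lambda(\Psi)$, it suffices to bound the RLCT of this sum with the product weight. Sums of functions of disjoint variable sets have additive RLCTs. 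The first summand alone has RLCT $H_0(M+N-H_0)/2$ near the positive point, by the exact case $H=H_0$ of Theorem~\ref{thm_main}. When $H_0$ is only the non-negative rank, I would instead use the upper-bound part applied to the interior block.
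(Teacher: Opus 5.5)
Your proposal is correct and is essentially the paper's argument. It uses the same split $\Phi\leqq 2\overline{\Phi}_1+2\overline{\Phi}_2$, applies Lemma~\ref{lem_local_reg} at the strictly positive point (where $\varphi_{\mathrm{Gam}}$ is smooth, positive and bounded) for the first block, and treats the second block as a gamma-weighted null model; the only difference is that you re-derive $(H-H_0)\min\{M\phi_U,N\phi_V\}/2$ by a weighted sublevel-volume argument, where the paper cites the gamma case of Lemma~\ref{lem_null}.

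One small fix: $\lVert U_2V_2\rVert^2$ is not a sum over $k$, so before using per-pair additivity you need $\lVert \sum_k u_kv_k^\top\rVert^2\leqq (H-H_0)\sum_k\lVert u_k\rVert^2\lVert v_k\rVert^2$. Alternatively, run your test region on the whole block via $\lVert U_2V_2\rVert^2\leqq\lVert U_2\rVert^2\lVert V_2\rVert^2$.
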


We state the position of the Main Theorem in the body of knowledge.
There are a few studies related to the RLCT of NMF.
First, for matrix factorization without non-negativity constraints, the exact RLCT has been obtained in all cases~\cite{Aoyagi1}.
Its proof uses the fact that the true rank is $r$ to transform the matrices into a canonical form containing the $r \times r$ identity matrix, and then partitions the product of the parameter matrices into an $r \times r$ part and the remaining parts using block matrices.
In contrast, for NMF, which imposes non-negativity constraints, one must consider the non-negative rank, a quantity distinct from the usual rank.
The non-negative rank is generally greater than or equal to the usual rank, and examples of non-negative matrices whose two types of rank strictly differ are known in~\cite{Cohen}.
Therefore, it is non-trivial whether a method similar to that of~\cite{Aoyagi1} is effective in determining the RLCT of NMF.

Second, in conventional research, upper bounds for the RLCT of NMF have thus far been derived by other methods.
The exact RLCT has been obtained when the true parameter is a zero matrix and when the inner dimensions of the matrix products for the statistical model and the true distribution are equal to one or two; conventional upper bounds have been derived by using these results~\cite{nhayashi2, nhayashi5}.
For variational inference in NMF, the asymptotic behavior of the variational free energy has also been clarified~\cite{Kohjima2017phase}.
In contrast, a lower bound for the variational approximation error has been obtained by deriving an upper bound for the RLCT when the same prior distribution and statistical model are used,
and it has also been shown that Bayesian inference and variational inference have different phase-transition structures with respect to the hyperparameters of the prior distribution~\cite{nhayashi8}.

The Main Theorem improves the the upper bound for the RLCT of NMF obtained in previous studies.
In Section~\ref{sec_discuss}, we show that the derived upper bound is strictly tighter than the conventional ones, and the Main Theorem partially resolves the existing conjecture~\cite{nhayashi5} for the exact value of the RLCT.
Moreover, a tighter lower bound of the variational approximation error can be immediately obtained by applying Corollary~\ref{cor_hyper} and Kohjima's result~\cite{Kohjima2017phase} to the previous proof~\cite{nhayashi8}.

\section{\label{sec_proof}Proof of the Main Theorem}

First, we recall a known result concerning the RLCT of unconstrained matrix factorization.
\begin{defi}[RLCT of Matrix Factorization~\cite{Aoyagi1}]\label{def_mf_rlct}
Let $A \in \mathrm{M}(I,K)$, $B \in \mathrm{M}(K,J)$, $A_0 \in \mathrm{M}(I,K_0)$, and $B_0 \in \mathrm{M}(K_0,J)$,
where $I \geqq 1$, $J \geqq 1$, $K \geqq K_0 \geqq 0$, and $K_0=\mathrm{rank}(A_0B_0)$.
Suppose that $(A,B)$ ranges over a compact subset $\Omega$ of the $K(I+J)$-dimensional Euclidean space and that $(A,B)$ satisfying $AB=A_0B_0$ is contained in $\Omega$.
Under suitable assumptions, the complex function of one variable
\begin{align}
\zeta_{\mathrm{MF}}(z)=\iint dAdB \lVert AB-A_0B_0 \rVert^{2z}
\end{align}
is holomorphic in the region $\mathrm{Re}(z)>0$, but it admits a unique analytic continuation to the entire complex plane as a meromorphic function, and all its poles are negative rational numbers{\rm~\cite{SWatanabeBookE, SWatanabeBookMath}}.
When its largest pole is $(-\lambda_{\mathrm{MF}})$,
$\lambda_{\mathrm{MF}}$ is called the RLCT of matrix factorization.
\end{defi}

\begin{thm}[Aoyagi and Watanabe~\cite{Aoyagi1}]\label{thm_aoyagi}
The RLCT of matrix factorization $\lambda_{\mathrm{MF}}=\lambda_{\mathrm{MF}}(I,J,K,K_0)$ is given as follows.
\begin{enumerate}
\item If $I+K_0 \leqq J+K$, $J+K_0 \leqq I+K$, and $K+K_0 \leqq I+J$,
\begin{enumerate}
\item if $I+J+K+K_0$ is even,
$$\lambda_{\mathrm{MF}}(I,J,K,K_0)=\frac{1}{8}\{2(K+K_0)(I+J)-(I-J)^2-(K+K_0)^2\}.$$
\item if $I+J+K+K_0$ is odd,
$$\lambda_{\mathrm{MF}}(I,J,K,K_0)=\frac{1}{8}\{2(K+K_0)(I+J)-(I-J)^2-(K+K_0)^2+1\}.$$
\end{enumerate}
\item If $I+K_0 > J+K$,
$$\lambda_{\mathrm{MF}}(I,J,K,K_0)=\frac{1}{2}(KJ-KK_0+IK_0).$$
\item If $J+K_0 > I+K$,
$$\lambda_{\mathrm{MF}}(I,J,K,K_0)=\frac{1}{2}(KI-KK_0+JK_0).$$
\item If $K+K_0 > I+J$,
$$\lambda_{\mathrm{MF}}(I,J,K,K_0)=\frac{1}{2}IJ.$$
\end{enumerate}
\end{thm}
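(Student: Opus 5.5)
The plan is to prove Theorem~\ref{thm_aoyagi} by reducing the general case to the case of a zero true matrix. We then resolve that case by an inductive sequence of blow-ups and finally minimize over the local strata of the zero set. Since the RLCT is the minimum of the local RLCTs over points of $\{AB=A_0B_0\}\cap\Omega$, and a positive bounded smooth weight does not change them, it suffices to compute the local RLCT of $\lVert AB-A_0B_0\rVert^2$ at each $(A^*,B^*)$ with $A^*B^*=A_0B_0$, and take the smallest.

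\textbf{Reduction.} First, by $\mathrm{GL}(I)\times\mathrm{GL}(J)$ acting on the left and right, which changes $\lVert\cdot\rVert^2$ only up to equivalence, we may assume
\begin{align}
A_0B_0=\begin{bmatrix} E_{K_0} & O\\ O & O\end{bmatrix},
\end{align}
where $E_{K_0}$ is the identity of size $K_0$. At a true point $(A^*,B^*)$, some $K_0\times K_0$ minor of $A$ and of $B$ is invertible. After a $\mathrm{GL}(K)$ change acting as $A\mapsto AP$, $B\mapsto P^{-1}B$, write
\begin{align}
A=\begin{bmatrix}A_{11}&A_{12}\\A_{21}&A_{22}\end{bmatrix},\qquad
B=\begin{bmatrix}B_{11}&B_{12}\\B_{21}&B_{22}\end{bmatrix},
\end{align}
with $A_{11},B_{11}$ invertible near the point. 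An analytic change of variables then shows that $AB-A_0B_0$ is equivalent to a direct sum of two pieces. The first piece consists of $K_0(I+J-K_0)$ independent linear coordinates: the blocks $(AB)_{11}-E$, $(AB)_{12}$ and $(AB)_{21}$. The second is the residual product $A'B'$, of size $(I-K_0)\times(J-K_0)$ with inner dimension $K-K_0$. Hence
\begin{align}
\lambda_{\mathrm{MF}}(I,J,K,K_0)=\tfrac12 K_0(I+J-K_0)+\lambda_{\mathrm{MF}}(I-K_0,\,J-K_0,\,K-K_0,\,0).
\end{align}
It should be checked that this identity reproduces all four cases of the theorem, so that it suffices to treat $K_0=0$.

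\textbf{Zero-rank case.} For $\lVert AB\rVert^2$ with $A\in\mathrm M(I,K)$ and $B\in\mathrm M(K,J)$ near $(0,0)$, perform successive blow-ups. Blowing up at the origin in the $A$-variables gives $A=a\tilde A$ with some entry of $\tilde A$ equal to $1$. This produces a Jacobian factor $a^{IK-1}$ and $\lVert AB\rVert^2=a^2\lVert\tilde AB\rVert^2$. The unit entry allows one further row or column of $\tilde AB$ to be split off as regular linear coordinates, exactly as in the reduction step, which lowers $I$, $J$ and $K$ by one. Iterating $s$ times, alternating blow-ups in $A$ and $B$ variables and with the exponents accumulating, yields a monomial normal form. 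Its candidate poles are explicit quadratic functions of $s$; the trivial candidates $IJ/2$ (blow up in the product) and the boundary candidates also appear. I will take the candidate list $\lambda(s)$ with $s=0,\dots,\min(I,J,K)$, which is how the original Aoyagi--Watanabe argument is organized.

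\textbf{Optimization and main obstacle.} The candidates are minimized over $s$. The function $\lambda(s)$ is a concave-up quadratic in $s$, so its integer minimizer lies near $(K+K_0+\text{shift})/2$. Rounding produces the parity distinction in case~1: even $I+J+K+K_0$ gives an exact half-integer vertex, and odd gives the extra $+1/8$. When the vertex falls outside $[0,\min]$, we get cases~2--4. Combined with the reduction formula, this gives all four cases.

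The main obstacle will be the resolution step in the zero-rank case. There one must show that the blow-up sequence actually reaches normal crossings, with every chart accounted for. One must also show that the minimum over charts and over all true points (including those where $A^*$ and $B^*$ have rank above $K_0$) equals the stated formula, and not merely an upper bound. For the lower bound I would first try an induction on $I+J+K$. The induction step would exploit that every chart of one blow-up is again of the same matrix-product type with smaller sizes and shifted weights $a^{\alpha}$. That reduces the claim to an inequality between the candidate quadratic forms.
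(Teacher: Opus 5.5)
The paper does not prove Theorem~\ref{thm_aoyagi}. It quotes the result from Aoyagi and Watanabe~\cite{Aoyagi1} and uses it as a black box, so your proposal has to stand on its own.

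\textbf{The reduction step is correct.} With $A_{11},B_{11}$ invertible, the blocks $(AB)_{11}-E_{K_0}$, $(AB)_{12}$, $(AB)_{21}$ can replace $B_{11},B_{12},A_{21}$ as coordinates. Modulo them, the lower-right block equals $A_{22}(E_{K-K_0}-B_{21}A_{12})^{-1}B_{22}$, a product of the required shape. This is the same mechanism as the paper's Lemma~\ref{lem_ideal_equal}. The check that your identity reproduces the four branches is the paper's Lemma~\ref{lem_rank_removal_identity}; case~4 is a one-line computation. You should add two points. First, the residual's local RLCT is smallest at the residual origin, by homogeneity. Second, $\Omega$ must contain a true point with $\mathrm{rank}\,A^*=\mathrm{rank}\,B^*=K_0$.

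\textbf{The genuine gap is the zero-rank case.} It carries all of the content of the theorem: the quadratic, the parity term and the branch boundaries. You do not actually prove it. You never write down $\lambda(s)$, you do not show the blow-ups reach normal crossings, you defer the lower bound, and saying it is ``how the original argument is organized'' is a citation rather than an argument. The details you do give are also off. A unit entry $\tilde a_{ik}=1$ turns row $i$ of $\tilde AB$ into $J$ regular coordinates and lowers $I$ and $K$ by one, but not $J$. The parity bookkeeping is reversed: with the candidates below, the vertex is an integer when $I+J+K+K_0$ is even. The $+\frac18$ comes from a half-integer vertex in the odd case.

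\textbf{How to close it.} The induction you hint at works once you see that the weight does not interact with the residual, and no full resolution is needed.

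Blow up only the $A$-variables at $A=0$. In the chart $A=a\tilde A$ with $\tilde a_{ik}=1$, the integrand is $|a|^{2z+IK-1}\lVert\tilde AB\rVert^{2z}$, and $a$ is independent of $(\tilde A,B)$. Everything is non-negative, so no cancellation occurs and the local RLCT at the origin is $\min\bigl\{IK/2,\ \min_{(\tilde A^*,B^*)}\lambda_{(\tilde A^*,B^*)}(\lVert\tilde AB\rVert^2)\bigr\}$. By homogeneity in $B$, the inner minimum is attained at $B^*=0$.

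At $(\tilde A^*,0)$ with $\mathrm{rank}\,\tilde A^*=s\geq1$, apply your own reduction with zero true matrix. It splits off $sJ$ coordinates and leaves a product of sizes $(I-s)\times(K-s)$ and $(K-s)\times J$. So the local RLCT there is $sJ/2+\lambda_{\mathrm{MF}}(I-s,J,K-s,0)$.

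Induction on $K$, with an identically zero residual contributing $0$, then gives the exact value, upper and lower bound at once:
\[
\lambda_{\mathrm{MF}}(I,J,K,0)=\frac12\min_{0\leq s\leq\min\{I,K\}}\bigl\{sJ+(I-s)(K-s)\bigr\}.
\]
This convex quadratic has vertex $(I+K-J)/2$.
\begin{itemize}
\item If the vertex lies in $[0,\min\{I,K\}]$, you get case~1, with the $+\frac18$ exactly when the vertex is a half-integer.
\item If the vertex lies below $0$, above $K$, or above $I$, you get cases 3, 2, 4 respectively, with values $IK/2$, $JK/2$, $IJ/2$.
\end{itemize}
The index must run to $\min\{I,K\}$, not $\min\{I,J,K\}$. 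For $(I,J,K)=(5,1,5)$ the minimizer is $s=4$ or $5$.

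Finally, apply this with sizes $I-K_0$, $J-K_0$, $K-K_0$. Their sum has the same parity as $I+J+K+K_0$. Combined with your reduction, this gives all four cases of the theorem.
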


Theorem \ref{thm_aoyagi} will be used to prove a lemma below.
We next state the lemmas used in the proof of the Main Theorem.
To show the statements, we use the following notation.
Let $W_0=U_0V_0$ and the matrix $F(U,V)=UV-W_0$ as $F=(F_{ij})_{i=1,j=1}^{M,N}$.
We assume that the chosen factorization belongs to the interior of
$\Theta$.
Also, let $\mathcal{Z}$ be the zero set of $F$:
$\mathcal{Z}=\{(U,V)\in\Theta \mid UV=W_0\}$.
In the proof of the Main Theorem, we first consider the case $H=H_0$ in detail and then use the result to derive an upper bound for general $H \geqq H_0$.
Thus, when $H=H_0$, we choose the true factorization $(U_0,V_0)$ of $W_0$ as a point at which the RLCT attains its minimum on $\mathcal{Z}$.
Let the ranks at this point be
$a=\mathrm{rank}(U_0)$, $b=\mathrm{rank}(V_0)$, and $r=\mathrm{rank}(W_0)=\mathrm{rank}(U_0V_0)$
and put $R=MN-(M-a)(N-b)$ and $c=H_0-a-b+r$.

\begin{lem}[Lemma 3.1 in~\cite{nhayashi8}]\label{lem_null}
Consider the case $\Phi(U,V)=\lVert UV \rVert^2$.
If the prior distribution $\varphi(U,V)$ is positive and bounded on
$\Phi^{-1}(0)$, then its RLCT $\lambda_{\mathrm{null}}$ is
\begin{align}
\lambda_{\mathrm{null}} = \frac{H \min \{M,N \}}{2}.
\end{align}
Moreover, when the prior satisfies
$\varphi(U,V)=\varphi_{\mathrm{Gam}}(U,V)$,
\begin{align}
\lambda_{\mathrm{null}} = \frac{H \min \{M\phi_U, N\phi_V \}}{2}.
\end{align}
\end{lem}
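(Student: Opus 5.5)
Write $\Phi(U,V)=\lVert UV\rVert^2=\sum_{i,j}\bigl(\sum_{k=1}^H u_{ik}v_{kj}\bigr)^2$. The plan is to use non-negativity to replace $\Phi$ by an equivalent function in which the $H$ columns/rows decouple, compute the RLCT of one decoupled block explicitly in polar coordinates, and add the $H$ contributions.

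\textbf{Step 1 (decoupling).} Since $\mathcal{K}\subset[0,\infty)$, every term $x_k=u_{ik}v_{kj}$ is non-negative. For non-negative reals,
\[
\sum_k x_k^2\leqq\Bigl(\sum_k x_k\Bigr)^2\leqq H\sum_k x_k^2 .
\]
Hence $\Phi$ is equivalent to
\[
\Psi(U,V)=\sum_{k=1}^H\lVert u_k\rVert^2\lVert v_k\rVert^2 ,
\]
where $u_k$ is the $k$-th column of $U$ and $v_k$ the $k$-th row of $V$. Equivalent functions have the same RLCT. Moreover, near any point the prior may be bounded above and below by positive constants, except in the gamma case, which we handle separately in Step 3.

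$\Psi$ is a sum of non-negative functions in disjoint sets of variables $(u_k,v_k)$. We then use the standard additivity of RLCTs for such sums; for the largest pole, multiplicities play no role. This gives $\lambda_{\mathrm{null}}=H\lambda_1$, where $\lambda_1$ is the RLCT of $\lVert u\rVert^2\lVert v\rVert^2$ on $u\in\mathcal{K}^M$, $v\in\mathcal{K}^N$.

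\textbf{Step 2 (one block).} We need the local RLCT of $\lVert u\rVert^2\lVert v\rVert^2$ at each point of its zero set $\{u=0\}\cup\{v=0\}$ and then take the minimum.

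At the origin, pass to polar coordinates $u=s\theta$ and $v=t\eta$, with $\theta,\eta$ in the non-negative parts of the unit spheres. These angular parts have positive surface measure, since $\mathcal{K}$ contains an interval $[0,\epsilon]$. The local zeta integral then becomes, up to a positive factor,
\[
\int_0^\epsilon\int_0^\epsilon s^{2z+M-1}t^{2z+N-1}\,ds\,dt .
\]
Its poles are at $-M/2$ and $-N/2$, so the local RLCT is $\min\{M,N\}/2$.

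At a point with $u\neq0$ and $v=0$, the function is locally equivalent to $\lVert v\rVert^2$, with local RLCT $N/2\geqq\min\{M,N\}/2$. The case $u=0$, $v\neq0$ is symmetric and gives $M/2$.

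Therefore $\lambda_1=\min\{M,N\}/2$, which yields $\lambda_{\mathrm{null}}=H\min\{M,N\}/2$. This uses that the prior is positive and bounded on $\Phi^{-1}(0)$, in particular near the origin.

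\textbf{Step 3 (gamma prior).} Here the factors $e^{-\theta u}$ are bounded above and below, so only $\prod_i u_i^{\phi_U-1}\prod_j v_j^{\phi_V-1}$ matters. In polar coordinates this contributes $s^{M(\phi_U-1)}t^{N(\phi_V-1)}$ times an integrable angular factor. The factor $\prod_i\theta_i^{\phi_U-1}$ is integrable on the simplex-like sphere part because $\phi_U>0$, and similarly for $\eta$. The radial integral becomes
\[
\int s^{2z+M\phi_U-1}t^{2z+N\phi_V-1}\,ds\,dt ,
\]
with poles at $-M\phi_U/2$ and $-N\phi_V/2$. The points with $u\neq0$ or $v\neq0$ give larger thresholds by the same one-variable computation. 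Summing over $k$ gives $H\min\{M\phi_U,N\phi_V\}/2$.

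\textbf{Main obstacle.} The delicate step is Step 1 in the gamma case. There the prior vanishes or blows up on coordinate faces, so equivalence of $\Phi$ and $\Psi$ must be combined with a prior that is not bounded below. Since the equivalence is a pointwise two-sided inequality, it still gives $c_1^z\zeta_\Psi\leqq\zeta_\Phi\leqq c_2^z\zeta_\Psi$ for real $z$, hence the same largest pole. I would verify this carefully on the real axis rather than appeal to the smooth-prior statement.
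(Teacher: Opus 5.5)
Your proof is correct and takes essentially the same route as the standard argument behind this lemma; the paper itself gives no proof and defers to \cite{nhayashi2,nhayashi8}. Non-negativity makes $\lVert UV\rVert^2$ equivalent to $\sum_k\lVert u_k\rVert^2\lVert v_k\rVert^2$, additivity reduces the problem to one block, and a blow-up at the origin of each block (your polar coordinates) gives $\min\{M,N\}/2$, or $\min\{M\phi_U,N\phi_V\}/2$ once the gamma exponents are absorbed into the Jacobian.

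Two small points. First, the fact $[0,\epsilon]\subset\mathcal{K}$ that you invoke does not follow from the paper's definition of $\mathcal{K}$, but it is genuinely needed: if $0$ were isolated in $\mathcal{K}$, $\Phi$ would be bounded away from $0$ off a null set and $\zeta$ would have no poles. Second, in your last paragraph the two-sided bound on $\zeta$ reverses direction for $z<0$; this is harmless, because you only need both real integrals to converge for the same $z$.
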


\begin{lem}\label{lem_linmap_rank}
Consider the linear map determined by matrices $A \in \mathrm{M}(M,H)$ and $B \in \mathrm{M}(H,N)$,
\begin{align}
\mathcal{L}_{(A,B)} &: \mathrm{M}(M,H) \times \mathrm{M}(H,N) \to \mathrm{M}(M, N), \qquad (X,Y) \mapsto XB+AY
\end{align}
If $a=\mathrm{rank}(A)$ and $b=\mathrm{rank}(B)$, then
\begin{align}
\mathrm{rank}(\mathcal{L}_{(A,B)})
=MN-(M-a)(N-b)
=Mb+aN-ab. \label{eq_rank_linear_map}
\end{align}
\end{lem}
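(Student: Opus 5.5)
We need to prove Lemma \ref{lem_linmap_rank}: the rank of $(X,Y)\mapsto XB+AY$ equals $MN-(M-a)(N-b)$. The plan is to compute the image directly after reducing $A$ and $B$ to a canonical form by invertible changes of coordinates.

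\emph{Normalization.} Choose $P\in\mathrm{GL}(M)$ and $Q\in\mathrm{GL}(H)$ with $PAQ=\blockA{I_a}$. Choose $Q'\in\mathrm{GL}(H)$ and $S\in\mathrm{GL}(N)$ with $Q'BS=\blockA{I_b}$. The map $Z\mapsto PZS$ is an automorphism of $\mathrm{M}(M,N)$, so it preserves rank. We have
\[
P(XB+AY)S=(PXQ'^{-1})(Q'BS)+(PAQ)(Q^{-1}YS),
\]
and the substitutions $X\mapsto PXQ'^{-1}$, $Y\mapsto Q^{-1}YS$ are bijections of the domain. Hence the image of $\mathcal{L}_{(A,B)}$ is carried isomorphically onto the image of $\mathcal{L}_{(A',B')}$ with $A'=\blockA{I_a}$ and $B'=\blockA{I_b}$. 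The two normalizations are independent because the inner index is acted on separately for $X$ and for $Y$. This is the one point that needs care, and it causes no real obstacle.

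\emph{Computing the image.} In the normal form, $XB'$ is the matrix whose first $b$ columns are the first $b$ columns of $X$ and whose other columns are zero. Since $H\geqq b$, these first $b$ columns are arbitrary in $\mathbb{R}^M$. So the span of $\{XB'\}$ is the set of matrices supported on columns $1,\dots,b$. Similarly, $A'Y$ has its first $a$ rows equal to the first $a$ rows of $Y$, which are arbitrary, and zeros elsewhere. So the span of $\{A'Y\}$ is the set of matrices supported on rows $1,\dots,a$.

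The image of $\mathcal{L}_{(A',B')}$ is the sum of these two coordinate subspaces. It consists exactly of the matrices that vanish on the block with rows $a+1,\dots,M$ and columns $b+1,\dots,N$. That block has $(M-a)(N-b)$ entries, so the image has dimension $MN-(M-a)(N-b)$.

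Expanding gives $MN-(MN-Mb-aN+ab)=Mb+aN-ab$, which is the formula claimed in \eqref{eq_rank_linear_map}.
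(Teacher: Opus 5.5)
Your proposal is correct and follows essentially the same route as the paper: independent normal forms $PAQ$ and $RBS$ (your $Q'BS$), the substitutions $X\mapsto PXR^{-1}$ and $Y\mapsto Q^{-1}YS$, and the observation that the image in normal form is exactly the set of matrices vanishing on the lower-right $(M-a)\times(N-b)$ block. Your remark that the two normalizations are independent because the inner index acts separately on $X$ and $Y$ is the same point the paper relies on implicitly.
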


\begin{lem}\label{lem_ideal_equal}
Suppose $H=H_0$.
If $M>a$, $N>b$, and $c>0$, then there exist analytic local coordinates $(z,X,Y,\xi)$ in a neighborhood of $(U_0,V_0)$,
such that
$z\in\mathbb{R}^{R}$, $X\in\mathrm{M}(M-a,c)$, $Y\in\mathrm{M}(c,N-b)$
and
\begin{align}
\left\langle (F_{ij})_{i=1,j=1}^{M,N} \right\rangle
=
\left\langle
z_1,\ldots,z_R,
((XY)_{\mu\nu})_{\mu=1,\nu=1}^{M-a,N-b}
\right\rangle
\label{eq_local_ideal}
\end{align}
hold, where $(XY)_{\mu\nu}$ is the $(\mu,\nu)$ entry of the matrix $XY \in \mathrm{M}(M-a,N-b)$ and $\langle \cdot \rangle$ is a generated ideal.

If $M=a$, $N=b$, or $c=0$, then there exist analytic local coordinates $(z,\xi)$ in a neighborhood of $(U_0,V_0)$ such that $z \in \mathbb{R}^R$ and
\begin{align}
\left\langle (F_{ij})_{i=1,j=1}^{M,N} \right\rangle
=
\left\langle
z_1,\ldots,z_R
\right\rangle
\label{eq_local_ideal_size_is_rank}
\end{align}
hold.
\end{lem}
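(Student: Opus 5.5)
The plan is to work in adapted linear coordinates around $(U_0,V_0)$, use the rank computation of Lemma~\ref{lem_linmap_rank} to split off $R$ coordinates that enter $F$ linearly, and then identify what remains as a residual matrix product.

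\textbf{Step 1: adapted bases and block form.} Act on $U,V$ by fixed invertible matrices $P\in\mathrm{GL}(M)$, $Q\in\mathrm{GL}(N)$ and $G\in\mathrm{GL}(H_0)$, sending $(U,V)\mapsto(PUG,\,G^{-1}VQ)$ and $F\mapsto PFQ$. This is a linear change of coordinates and it does not change the ideal $\langle F_{ij}\rangle$. I will choose these matrices to bring $(U_0,V_0)$ to a normal form.
\begin{itemize}
\item In $\mathbb{R}^{H_0}$, decompose $\ker U_0\cap\mathrm{im}\,V_0$, which has dimension $b-r$ because $\mathrm{rank}(U_0V_0)=b-\dim(\mathrm{im}\,V_0\cap\ker U_0)$.
\item Add a complement of it inside $\mathrm{im}\,V_0$, of dimension $r$.
\item Add a complement inside $\ker U_0$, of dimension $H_0-a-(b-r)=c$.
\item Add a final complement of dimension $a-r$.
\end{itemize}
With $P,Q$ chosen compatibly, $U_0$ and $V_0$ become $0/1$ block matrices. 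In these bases $\mathbb{R}^M=\mathrm{im}\,U_0\oplus C_M$ and $\mathbb{R}^N=\mathrm{rowspace}\,V_0\oplus C_N$, where $\dim C_M=M-a$ and $\dim C_N=N-b$.

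\textbf{Step 2: extract the $R$ linear coordinates.} Write $U=U_0+\Delta U$ and $V=V_0+\Delta V$, so that
\[
F=\mathcal{L}_{(U_0,V_0)}(\Delta U,\Delta V)+\Delta U\,\Delta V .
\]
By Lemma~\ref{lem_linmap_rank}, the linear part has rank $R=MN-(M-a)(N-b)$. Its image is exactly the set of matrices whose $C_M\times C_N$ block vanishes, since $\Delta U\,V_0$ has rows in the row space of $V_0$ and $U_0\,\Delta V$ has columns in $\mathrm{im}\,U_0$. So the $R$ entries of $F$ outside the $(M-a)\times(N-b)$ corner, call them $F_{\mathrm{out}}$, have linearly independent differentials at $(U_0,V_0)$. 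By the analytic inverse function theorem, they can be taken as the first $R$ coordinates, $z:=F_{\mathrm{out}}$. The remaining coordinates are the entries of $(\Delta U,\Delta V)$ in a complement of $\ker d F_{\mathrm{out}}$. In the degenerate cases $M=a$ or $N=b$ the corner block is empty, and $\langle F\rangle=\langle z\rangle$ immediately.

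\textbf{Step 3: the corner block as a matrix product.} The corner is $F_{22}=(\Delta U\,\Delta V)_{22}=\Delta U_{2\bullet}\,\Delta V_{\bullet 2}$, a product of the $C_M$-rows of $\Delta U$ with the $C_N$-columns of $\Delta V$. Split the inner index $\mathbb{R}^{H_0}$ according to the four pieces of Step 1.
\begin{itemize}
\item Along the directions not in $\ker U_0$ (the $r$- and $(a-r)$-pieces), the $C_M$-rows of $\Delta U$ are expressible from $z$ via the equations $F_{21}=z$. Concretely, $F_{21}=\Delta U_{2\bullet}V_{\bullet1}$, and on the $r$-piece $V_0$ is invertible, so solving gives $\Delta U_{2,r}=(\text{unit})\cdot z - \Delta U_{2,\mathrm{rest}}(\dots)$.
\item Similarly, along the directions in $\mathrm{im}\,V_0$, the $C_N$-columns of $\Delta V$ are eliminated using $F_{12}=z$.
\end{itemize}
After substituting these (a Schur-complement type elimination), I expect
\[
F_{22}\equiv X\,Y \pmod{\langle z\rangle}\cdot(\text{invertible analytic factors}),
\]
where $X$ is the $(M-a)\times c$ block of $\Delta U$ on the $c$-dimensional piece $\ker U_0\setminus\mathrm{im}\,V_0$, and $Y$ is the matching $c\times(N-b)$ block of $\Delta V$. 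The remaining coordinates are collected in $\xi$. When $c=0$ this product is $0$ modulo $\langle z\rangle$, which gives~(\ref{eq_local_ideal_size_is_rank}).

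\textbf{Main obstacle.} The hard part is Step 3: showing that the eliminated cross terms really lie in $\langle z\rangle$ and do not leave extra products. The $(a-r)$- and $(b-r)$-pieces might seem to contribute additional bilinear terms, such as products of the $C_M$-row part of $\Delta U$ on the $(b-r)$-piece with $\Delta V$. I would handle this by writing the three blocks $F_{11},F_{12},F_{21}$ explicitly and performing the elimination with an invertible analytic matrix near the identity, in the spirit of Aoyagi's canonical-form argument behind Theorem~\ref{thm_aoyagi}. The goal is to check that $X$ and $Y$, together with $z$, are part of a coordinate system, which needs their differentials to be independent of $dz$. That independence is exactly what the dimension count $c=H_0-a-b+r$ should guarantee.
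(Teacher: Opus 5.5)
The gap is Step~3, which carries the whole content of the lemma, and as written it does not go through.

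First, the two eliminations are attached to the wrong pieces. Let $U_A,U_B$ be the first $a$ and last $M-a$ rows of $U$, and $V_L,V_R$ the first $b$ and last $N-b$ columns of $V$. Split the columns of $U_B$ as $[B_r\ B_t\ B_s\ B_c]$ and the rows of $V_R$ as $C_r,C_t,C_s,C_c$ along $E_r\oplus E_t\oplus E_s\oplus E_c$, where $E_r\oplus E_t=\mathrm{Im}\,V_0$ and $E_t\oplus E_c=\mathrm{Ker}\,U_0$.

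\begin{itemize}
\item $V_L$ is injective with image near $\mathrm{Im}\,V_0$. So $F_{21}=U_BV_L$ solves for $(B_r,B_t)$, i.e.\ for $U_B$ along $\mathrm{Im}\,V_0$, not along a complement of $\mathrm{Ker}\,U_0$.
\item $F_{12}=U_AV_R$ solves for $(C_r,C_s)$, i.e.\ for $V_R$ off $\mathrm{Ker}\,U_0$, not along $\mathrm{Im}\,V_0$.
\end{itemize}

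Done correctly, this leaves $F_{22}\equiv[B_s\ B_c]\,\Gamma\,\begin{bmatrix}C_t\\ C_c\end{bmatrix}$ modulo $\langle z\rangle$. Here $\Gamma=\begin{bmatrix}\Gamma_{st}&\Gamma_{sc}\\ \Gamma_{ct}&\Gamma_{cc}\end{bmatrix}$ is an analytic $(H_0-b)\times(H_0-a)$ matrix in $(U_A,V_L)$ with $\Gamma(0)=\begin{bmatrix}O&O\\ O&I_c\end{bmatrix}$. The coupling blocks $\Gamma_{sc},\Gamma_{ct}$ do not lie in $\langle z\rangle$.

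Consequently your predicted form, with $X,Y$ the raw $E_c$-blocks $B_c,C_c$ up to invertible factors, is false as soon as $a>r$ (symmetrically, $b>r$). Use the normal form of Step~1 with the inner index ordered $E_r,E_t,E_s,E_c$, the rows of $U$ split as $r,a-r,M-a$, and the columns of $V$ split as $r,b-r,N-b$. Take
\[
U=\begin{bmatrix} I_r&O&O&O\\ O&O&I_{a-r}&\epsilon D\\ O&O&B_s&O\end{bmatrix},\qquad
V=\begin{bmatrix} I_r&O&O\\ O&I_{b-r}&O\\ O&O&-\epsilon DC_c\\ O&O&C_c\end{bmatrix},
\]
with $D\in\mathrm{M}(a-r,c)$ and $\epsilon,B_s,C_c$ small. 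Every block of $UV-W_0$ vanishes except the corner, which equals $-\epsilon B_sDC_c$. Thus $z=0$ and $B_c=O$, yet $F_{22}\neq O$ for generic data. So no invertible analytic factors around $B_cC_c$ can reproduce $\langle F_{ij}\rangle$.

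What is missing is the reason the cross terms collapse into a single product of inner size $c$. On $S=\{z=0\}$:
\begin{itemize}
\item the rows of $U_B$ annihilate $\mathrm{Im}\,V_L$;
\item the columns of $V_R$ lie in $K=\mathrm{Ker}\,U_A$;
\item $U_AV_L=W_{11}$ with $\mathrm{rank}\,W_{11}=r$ forces $\dim(K\cap\mathrm{Im}\,V_L)=b-r$.
\end{itemize}
Hence $U_BV_R$ factors through $K/(K\cap\mathrm{Im}\,V_L)$, which has dimension $c$.

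In your block language, $\Gamma$ has rank exactly $c$ wherever $F_{11}=0$ near the base point. So its Schur complement $\Gamma_{st}-\Gamma_{sc}\Gamma_{cc}^{-1}\Gamma_{ct}$ vanishes there. Then $X=(B_c+B_s\Gamma_{sc}\Gamma_{cc}^{-1})\Gamma_{cc}$ and $Y=C_c+\Gamma_{cc}^{-1}\Gamma_{ct}C_t$ satisfy $F_{22}=XY$ on $S$.

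Since this identity holds only on $S$, Hadamard's lemma is what turns it into $F_{22}-XY\in\langle z\rangle$. The functions $(z,X,Y)$ extend to a coordinate system because $dX_0=dB_c$, $dY_0=dC_c$, and varying only $B_c,C_c$ stays inside $\ker dz_0$. This is what the paper does intrinsically with the moving projection $P_K=I-R_AU_A$, $E=P_KE_c^0$, $X=U_BE$, $Y=\pi V_R$.

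You single out independence of the differentials as the crux and attribute it to the count $c=H_0-a-b+r$, but that is the easy part. The rank-$c$ collapse on $S$ is the idea your proposal lacks.
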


\begin{lem}\label{lem_local_reg}
Let $H=H_0$.
Under the same assumptions as in Lemma$~\ref{lem_ideal_equal}$,
denote the RLCT obtained by restriction to a sufficiently small neighborhood of $(U_0,V_0)$ by
$\lambda_{(U_0,V_0)}$.
Furthermore,
if the prior distribution is positive and bounded in this neighborhood, then
\begin{align}
\lambda
=\lambda_{(U_0,V_0)}
=\frac{R}{2}+\tilde{\lambda}_{\mathrm{MF}}(M-a,N-b,c,0)
\leqq \frac{H_0(M+N-H_0)}{2}
\end{align}
holds, where $\tilde{\lambda}_{\mathrm{MF}}(M-a,N-b,c,0)$ is $\lambda_{\mathrm{MF}}$ in Theorem$~\ref{thm_aoyagi}$ when $M>a$, $N>b$, and $c>0$, and is $0$ otherwise.
In particular, if $H_0=r$, then
\begin{align}
\lambda=\frac{H_0(M+N-H_0)}{2}
\end{align}
holds.
\end{lem}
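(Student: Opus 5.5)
The plan is to read off the local RLCT at $(U_0,V_0)$ directly from the normal form in Lemma~\ref{lem_ideal_equal}, and then to bound it by an elementary inequality in $M,N,a,b,r,H_0$.

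First, the identity $\lambda=\lambda_{(U_0,V_0)}$. By the localization property of the zeta function, $\lambda$ is the minimum of the local RLCTs over points of $\Phi^{-1}(0)\cap\Theta$. Compactness of $\Theta$ and a partition of unity justify this, and the prior is positive and bounded there, so it does not affect local RLCTs. Since $(U_0,V_0)$ was chosen as a minimizing point on $\mathcal{Z}$ and lies in the interior of $\Theta$, we get $\lambda=\lambda_{(U_0,V_0)}$.

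Next, the local computation. The RLCT of $\Phi=\sum F_{ij}^2$ depends only on the ideal $\langle F_{ij}\rangle$: if two ideals coincide, the sums of squares of their generators are mutually bounded up to constants on a small neighborhood, and a positive bounded prior and the analytic Jacobian of the coordinate change (bounded and nonvanishing) do not change the largest pole. By Lemma~\ref{lem_ideal_equal}, we may therefore replace $\Phi$ locally by
\[
\sum_{i=1}^R z_i^2+\lVert XY\rVert^2 ,
\]
where the variables $\xi$ are absent and so contribute nothing. The variables $z$ and $(X,Y)$ are disjoint, so the RLCT is additive. The part $\sum z_i^2$ is a regular quadratic in $R$ variables and contributes $R/2$. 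The part $\lVert XY\rVert^2$ is unconstrained matrix factorization with true matrix $O$, of sizes $(M-a)\times c$ and $c\times(N-b)$. Since $(U_0,V_0)$ is interior, a neighborhood of $0$ in $(X,Y)$ is available, and Theorem~\ref{thm_aoyagi} gives $\lambda_{\mathrm{MF}}(M-a,N-b,c,0)$. In the degenerate cases $M=a$, $N=b$ or $c=0$, only $z$ appears and the answer is $R/2$. This proves the displayed formula with $\tilde\lambda_{\mathrm{MF}}$.

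For the inequality I would not use the case analysis of Theorem~\ref{thm_aoyagi}. Instead I would use the crude bound $\lambda_{\mathrm{MF}}(I,J,c,0)\le c\min\{I,J\}/2$, which is Lemma~\ref{lem_null} with inner dimension $c$, or equivalently comes from integrating near the smooth stratum $\{X=0\}$ or $\{Y=0\}$. With $I=M-a$, $J=N-b$ and $R=Mb+aN-ab$, it then remains to show
\[
Mb+aN-ab+c\min\{M-a,N-b\}\le H_0(M+N-H_0),\qquad c=H_0-a-b+r .
\]
Here I use the constraints $r\le a,b\le H_0\le\min\{M,N\}$ and Sylvester's inequality $r\ge a+b-H_0$, i.e.\ $c\ge 0$. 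In the model case $a=b=r$, the right-hand side minus $R$ equals $c(I+J-c)$, and the inequality reduces to $c\le\max\{I,J\}$. This holds because $c=H_0-r\le\min\{M,N\}-r$.

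This general verification is the step I expect to be the main obstacle. My approach is to treat the difference as a function of $(a,b)$ for fixed $H_0$ and $r$. I would check that it is monotone when $a$ or $b$ decreases toward $r$, with the terms $c=H_0-a-b+r$ absorbing the change. This reduces everything to the case $a=b=r$ above, or to the boundary cases $c=0$ or $a=M$, where it is a direct expansion.

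Finally, the equality case. If $H_0=r$, then $r\le a,b\le H_0$ forces $a=b=r=H_0$. Hence $c=0$, $\tilde\lambda_{\mathrm{MF}}=0$ and $R=H_0(M+N-H_0)$, which gives $\lambda=H_0(M+N-H_0)/2$.
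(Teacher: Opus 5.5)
Your first steps coincide with the paper's: the reduction $\lambda=\lambda_{(U_0,V_0)}$, the passage from the ideal of Lemma~\ref{lem_ideal_equal} to $\lVert z\rVert^2+\lVert XY\rVert^2$ with additivity, and the case $H_0=r$. The gap is in the inequality. You reduce it to
\[
Mb+aN-ab+c\min\{M-a,N-b\}\le H_0(M+N-H_0),
\]
but you leave it open, and the monotonicity argument you sketch is false as stated. In the branch $M-a\le N-b$, the difference of the two sides equals
\[
(M-a)(N-H_0+a-r)-(M-H_0)(N-H_0).
\]
This does not depend on $b$ and is a concave quadratic in $a$. For instance, within your constraints, $M=N=4$, $H_0=3$, $r=b=1$ gives the values $2,3,2$ for $a=1,2,3$, so the difference is not monotone as $a$ decreases to $r$.

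The missing step is in fact a one-liner. Since $\min\{M-a,N-b\}\le M-a$,
\[
Mb+aN-ab+c\min\{M-a,N-b\}\le MN-(M-a)(N-b-c).
\]
Moreover $M-a\ge M-H_0\ge 0$ and $N-b-c=N-H_0+(a-r)\ge N-H_0\ge 0$. Hence $(M-a)(N-b-c)\ge (M-H_0)(N-H_0)$, which is the claim because $H_0(M+N-H_0)=MN-(M-H_0)(N-H_0)$.

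With this fix, your route is correct and genuinely shorter than the paper's. The paper substitutes into every branch of Theorem~\ref{thm_aoyagi}. For the unbalanced branches and the degenerate case it uses exactly the estimate above ($a+c\le H_0$, $b+c\le H_0$). For the balanced branch, however, it needs a separate square completion with a parity argument, $(M+N-H_0-r)^2-\delta\ge 4(M-H_0)(N-H_0)$.

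Your crude bound $\lambda_{\mathrm{MF}}(I,J,c,0)\le c\min\{I,J\}/2$ is valid: $c\le\min\{M-a,N-b\}$, and zeros $(0,Y)$ with $\mathrm{rank}\,Y=c$, or $(X,0)$ with $\mathrm{rank}\,X=c$, occur arbitrarily close to the origin. This bound absorbs the balanced branch as well, so no parity bookkeeping is needed.

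What the paper's route buys is the explicit branch-wise value of $\lambda$, which it reuses in Section~\ref{sec_discuss} for Proposition~\ref{prop_nmf_vs_mf_ranksmall}. For the lemma as stated, your bound suffices.
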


An outline of the proof of the Main Theorem using these lemmas is shown below.
\begin{center}
\begin{tikzpicture}[
    lemma/.style={
        draw,
        rounded corners,
        minimum width=22mm,
        minimum height=9mm,
        align=center
    },
    theorem/.style={
        draw,
        double,
        rounded corners,
        minimum width=25mm,
        minimum height=10mm,
        align=center
    },
    arrow/.style={
        -{Stealth},
        thick
    },
    node distance=12mm and 18mm
]

\node[lemma] (B) {Lemma \ref{lem_linmap_rank}};
\node[lemma, right=7mm of B] (C) {Lemma \ref{lem_ideal_equal}};
\node[lemma, right=7mm of C] (D) {Lemma \ref{lem_local_reg}};
\node[lemma, below=8mm of B] (A) {Lemma \ref{lem_null}};

\node[
    circle,
    fill,
    inner sep=1.5pt,
    right=7mm of D
] (join) {};

\node[theorem, right=7mm of join] (main) {Main Theorem};

\draw[arrow] (B) -- (C);
\draw[arrow] (C) -- (D);

\draw[thick] (D) -- (join);
\draw[arrow] (A) -| (join);
\draw[arrow] (join) -- (main);

\end{tikzpicture}
\end{center}

Lemma~\ref{lem_null} has been proved in previous studies~\cite{nhayashi2,nhayashi8}.
We prove Lemmas~\ref{lem_linmap_rank}, \ref{lem_ideal_equal}, and \ref{lem_local_reg} below.

\begin{proof}[{\bf Proof of Lemma~\ref{lem_linmap_rank}}]
Choose nonsingular matrices $P \in \mathrm{GL}(M)$, $Q \in \mathrm{GL}(H)$, $R \in \mathrm{GL}(H)$, and $S \in \mathrm{GL}(N)$ such that
\begin{align}
PAQ=\blockA{I_a}, \quad
RBS=\blockA{I_b}.
\end{align}
Then, for $(X,Y)$,
\begin{align}
P(\mathcal{L}_{(A,B)}(X,Y))S &= PXBS + PAYS \\
&= PXR^{-1}RBS + PAQQ^{-1}YS \\
&=PXR^{-1}\blockA{I_b}
+\blockA{I_a}Q^{-1}YS.
\end{align}
Setting $\tilde{X}=PXR^{-1}$ and $\tilde{Y}=Q^{-1}YS$, the coordinate transformation $(X,Y)\mapsto(\tilde{X},\tilde{Y})$ is invertible,
and the transformation $Z\mapsto PZS$ on the output side is also invertible.
Therefore, the rank of $\mathcal{L}_{(A,B)}$ is equal to that of
\begin{align}
(\tilde{X},\tilde{Y})
\mapsto
\tilde{X}\blockA{I_b}+\blockA{I_a}\tilde{Y}.
\end{align}
In this canonical form, $\tilde{X}\blockA{I_b}$ allows only the first $b$ columns to vary freely,
while $\blockA{I_a}\tilde{Y}$ allows only the first $a$ rows to vary freely.
Therefore, the image consists of all $M\times N$ matrices whose lower-right $(M-a)\times(N-b)$ block is zero.
The dimension of the image is
\begin{align}
\mathrm{rank}(\mathcal{L}_{(A,B)}) = MN-(M-a)(N-b) = Mb+aN-ab.
\end{align}
\end{proof}

\begin{proof}[{\bf Proof of Lemma~\ref{lem_ideal_equal}}]
By assumption,
\begin{align}
0 \leqq r \leqq \min\{a,b\}, \quad r \leqq H_0, \quad \max\{a,b\} \leqq H_0.
\end{align}
Regarding $V_0$ and $U_0$ as linear maps from $\mathbb{R}^N$ to $\mathbb{R}^{H_0}$ and
from $\mathbb{R}^{H_0}$ to $\mathbb{R}^M$, respectively, the image of the restricted map $U_0|_{\mathrm{Im}(V_0)}:\mathrm{Im}(V_0)\to\mathbb{R}^M$ is $\mathrm{Im}(U_0V_0)$.
Thus, by the rank--nullity theorem,
\begin{align}
r=\mathrm{rank}(U_0V_0)
=\dim(\mathrm{Im}(V_0))-\dim(\mathrm{Ker}(U_0|_{\mathrm{Im}(V_0)}))
=b-\dim(\mathrm{Im}(V_0)\cap\mathrm{Ker}(U_0)).
\end{align}
Therefore,
$\dim(\mathrm{Im}(V_0)\cap\mathrm{Ker}(U_0))=b-r \geqq 0$.
Moreover, since $\dim(\mathrm{Ker}(U_0))=H_0-a$, we have $\dim(\mathrm{Im}(V_0)\cap\mathrm{Ker}(U_0)) \leqq H_0-a$, and hence
\begin{align}
r &= b-\dim(\mathrm{Im}(V_0)\cap\mathrm{Ker}(U_0)) \\
&\geqq b+a-H_0.
\end{align}
Setting $c=H_0-a-b+r$, we have $c \geqq 0$ and
\begin{align}
c=\dim(\mathrm{Ker}(U_0))-\dim(\mathrm{Im}(V_0)\cap\mathrm{Ker}(U_0)).
\end{align}

We now choose a direct-sum decomposition of the intermediate space $\mathbb{R}^{H_0}$,
\begin{align}
\mathbb{R}^{H_0}=E_r\oplus E_t\oplus E_s\oplus E_c
\end{align}
such that
\begin{align}
\dim(E_r)=r,\quad \dim(E_t)=b-r,\quad
\dim(E_s)=a-r,\quad \dim(E_c)=c,
\end{align}
and
\begin{align}
E_r\oplus E_t=\mathrm{Im}(V_0),\qquad
E_t\oplus E_c=\mathrm{Ker}(U_0).
\end{align}
Since invertible linear transformations of the parameter and output spaces do not change the RLCT, we can assume that
\begin{align}
U_0=
\begin{bmatrix}
I_r & O & O & O \\
O & O & I_{a-r} & O \\
O & O & O & O
\end{bmatrix}, \quad
V_0=
\begin{bmatrix}
I_r & O & O \\
O & I_{b-r} & O \\
O & O & O \\
O & O & O
\end{bmatrix} \label{eq_simultaneous_standard_form}
\end{align}
without loss of generality.
The row-block sizes of $U_0$ are $r,a-r,M-a$, the column-block sizes of $V_0$ are $r,b-r,N-b$, respectively, and they are ordered as $E_r,E_t,E_s,E_c$.

We set $U=U_0+\Delta U$ and $V=V_0+\Delta V$ and obtain
\begin{align}
F(U,V)=UV-W_0=\Delta U V_0+U_0\Delta V+\Delta U\Delta V.
\end{align}
Define its linear part by
\begin{align}
L(\Delta U,\Delta V)=\Delta U V_0+U_0\Delta V.
\end{align}
Then, by Lemma~\ref{lem_linmap_rank}, we have
\begin{align}
\mathrm{rank}(L)=R=MN-(M-a)(N-b). \label{eq_rank_L_corrected}
\end{align}
The standard form Eq.~\eqref{eq_simultaneous_standard_form} shows that the image of $L$ is precisely the set of all matrices whose lower-right $(M-a)\times(N-b)$ block is zero.

Partitioning the output into $a$ and $M-a$ rows and into $b$ and $N-b$ columns, write
\begin{align}
U=\begin{bmatrix}U_A\\ U_B\end{bmatrix},\qquad
V=\begin{bmatrix}V_L&V_R\end{bmatrix}
\end{align}
where
\begin{align}
&U_A\in\mathrm{M}(a,H_0),\quad
U_B\in\mathrm{M}(M-a,H_0),\\
&V_L\in\mathrm{M}(H_0,b),\quad
V_R\in\mathrm{M}(H_0,N-b).
\end{align}
Let $W_{11}$ denote the upper-left $a\times b$ block of $W_0$. By the standard form Eq.~\eqref{eq_simultaneous_standard_form}, we may write
\begin{align}
W_0=
\begin{bmatrix}
W_{11}&O\\
O&O
\end{bmatrix},
\qquad
W_{11}\in\mathrm{M}(a,b),\quad
\mathrm{rank}(W_{11})=r.
\end{align}
Consequently, we have
\begin{align}
F(U,V)=UV-W_0=
\begin{bmatrix}
U_AV_L-W_{11}&U_AV_R\\
U_BV_L&U_BV_R
\end{bmatrix}.
\label{eq_local_blocks}
\end{align}

Let $F$ be the matrix on the right-hand side
and its block matrices are denoted by
$\tilde F_{11}=U_AV_L-W_{11}$, $\tilde F_{12}=U_A V_R$, $\tilde F_{21}=U_B V_L$, and $\tilde F_{22}=U_B V_R$, respectively.
Collect the three blocks other than the lower-right block as
\begin{align}
z=(\tilde F_{11}, \tilde F_{12}, \tilde F_{21})
=(U_AV_L-W_{11},\,U_AV_R,\,U_BV_L).
\label{eq_local_z}
\end{align}
The number of components of $z$ is
$ab+a(N-b)+(M-a)b=R$ and Eq.~\eqref{eq_rank_L_corrected} implies that $dz_{(U_0,V_0)}$ is surjective.
Thus, by the analytic submersion theorem, $z$ can be taken as part of a system of analytic local coordinates in a neighborhood of $(U_0,V_0)$.
Denoting the remaining coordinates by $\eta$, we obtain the analytic submanifold
\begin{align}
S=\{(z,\eta) \mid z=0\},
\label{eq_local_S}
\end{align}
where the origin of the coordinates $(z,\eta)$ is $(U_0,V_0)$.
On $S$, we have
\begin{align}
U_AV_L=W_{11},\qquad U_AV_R=O,\qquad U_BV_L=O.
\end{align}
By taking the neighborhood sufficiently small, we may ensure that
$\mathrm{rank}(U_A)=a$ and $\mathrm{rank}(V_L)=b$ throughout the neighborhood.
Thus, $U_A$ is surjective and $V_L$ is injective.
On $S$, we set
linear spaces
$K=\mathrm{Ker}(U_A)$ and $J=K\cap\mathrm{Im}(V_L)$.
These are vector spaces satisfying $J \subset K$. The equality $U_AV_L=W_{11}$ gives 
\begin{align}
J=V_L(\mathrm{Ker}(W_{11})). \label{eq_essential_space_in_V}
\end{align}
Moreover, since $\mathrm{rank}(W_{11})=r$, the injectivity of $V_L$ yields
\begin{align}
\dim J=b-r,\qquad
\dim K=H_0-a,\qquad
\dim(K/J)=H_0-a-b+r=c,
\label{eq_local_quotient}
\end{align}
where $K/J$ is the quotient vector space obtained by identifying vectors in $K$ that differ by an element of $J$.
Note that the above quotient space can be $\{0 \}$.
If $H_0=r$, then it immediately follows that $a=b=r$ and $c=0$; in this case, $K=\{ 0 \}$, $J=\{ 0 \}$, and $K/J=\{ \{ 0 \} \} \cong \{ 0 \}$.
Let $K_0$ and $J_0$ be the value of $K$ and $J$ at $(U_0, V_0)$, respectively.

Since the rank of $U_A$ remains $a$ in the neighborhood,
there exists a regular $a \times a$ submatrix of $U_A$.
Using this submatrix, choose a right inverse that depends analytically on $U_A$ and satisfies
\begin{align}
R_A:\mathbb{R}^a\longrightarrow\mathbb{R}^{H_0},
\qquad U_AR_A=I_a.
\end{align}
Put $P_K=I_{H_0}-R_AU_A$, we have $U_AP_K=O$ and $P_Kv=v$ for any $v\in K$.
Hence, $P_K$ is an analytic projection onto $K$.
Let $E_c^0\in\mathrm{M}(H_0,c)$ be the matrix whose column vectors form a basis of $E_c$, and set $E=P_KE_c^0$.
At $(U_0,V_0)$, we have $E=E_c^0$ and
$K_0=J_0\oplus E_c$.
Therefore, after shrinking the neighborhood if necessary, we have
\begin{align}
K=J\oplus\mathrm{Im}(E).
\label{eq_local_direct_sum}
\end{align}

Let $T\in\mathrm{M}(b,b-r)$ be the matrix whose column vectors form a basis of $\mathrm{Ker}(W_{11})$. Then, by Eq.~\eqref{eq_essential_space_in_V}, the columns of $V_LT$ form a basis of $J$, and
\begin{align}
B=\begin{bmatrix}V_LT&E\end{bmatrix}
\in\mathrm{M}(H_0,H_0-a).
\end{align}
Then the columns of $B$ form a basis of $K$ by Eq.~\eqref{eq_local_direct_sum}.
Since some square submatrix of $B$ of order $H_0-a$ is nonsingular throughout the neighborhood, choose a left inverse that depends analytically on $B$ in the form
\begin{align}
L_B=\begin{bmatrix}\rho\\ \pi\end{bmatrix},
\qquad L_BB=I_{H_0-a},
\end{align}
where $\rho$ and $\pi$ have $b-r$ and $c$ rows, respectively. Then
\begin{align}
\pi V_LT=O,\qquad \pi E=I_c
\label{eq_local_pi}
\end{align}
and any $v\in K$ can be uniquely expressed as
\begin{align}
v=V_LT\,\rho v+E\,\pi v.
\end{align}
Thus, the map $K/J\to\mathbb{R}^c$ induced by $\pi$ is an analytic isomorphism.

Since $U_AV_R=O$ on $S$, we have $\mathrm{Im}(V_R)\subset K$.
Applying the above representation to each column of $V_R$ gives
\begin{align}
V_R=V_LT\,\rho V_R+E\,\pi V_R.
\end{align}
On the other hand, $U_BV_L=O$ implies $U_BV_LT=O$, and hence
\begin{align}
U_BV_R=U_BE\,\pi V_R.
\end{align}
Set $X=U_BE\in\mathrm{M}(M-a,c)$ and $Y=\pi V_R\in\mathrm{M}(c,N-b)$.
Then, on $S$,
\begin{align}
F_{22}=U_BV_R=XY
\label{eq_local_XY}
\end{align}
holds as an identity of analytic functions.

We show that $X$ and $Y$ are independent local coordinates on $S$.
At $(U_0,V_0)$, we have $U_{B,0}=O$, $V_{R,0}=O$, and $E_0=E_c^0$, so
\begin{align}
dX_0(\Delta U,\Delta V)&=\Delta U_BE_c^0,\\
dY_0(\Delta U,\Delta V)&=\pi_0\Delta V_R.
\end{align}
If only the columns of $\Delta U_B$ corresponding to $E_c$ are varied, then
$\Delta U_BV_{L,0}=O$, so this variation belongs to $T_0S$.
This variation produces an arbitrary $\Delta X\in\mathrm{M}(M-a,c)$.
Similarly, if only the rows of $\Delta V_R$ corresponding to $E_c$ are varied, then
$U_{A,0}\Delta V_R=O$, and an arbitrary $\Delta Y\in\mathrm{M}(c,N-b)$ is obtained.
It follows that
\begin{align}
d(X,Y)_0:T_0S\longrightarrow
\mathrm{M}(M-a,c)\times\mathrm{M}(c,N-b)
\end{align}
is surjective.
The analytic submersion theorem now allows us to take $(X,Y,\xi)$ as analytic local coordinates on $S$. Furthermore, by Eq.~\eqref{eq_local_S}, analytically extending these coordinates to the ambient neighborhood makes
\begin{align}
(z,X,Y,\xi)
\label{eq_local_coordinates}
\end{align}
a system of analytic local coordinates in a neighborhood of $(U_0,V_0)$.

By Eq.~\eqref{eq_local_XY},
\begin{align}
\tilde F_{22}(0,X,Y,\xi)=XY.
\end{align}
Thus, for any $\mu,\nu$,
$F_{22,\mu\nu}-(XY)_{\mu\nu}$ vanishes identically on $z=0$.
By Hadamard's lemma for analytic functions, there exist analytic functions $H_{i,\mu\nu}$ such that
\begin{align}
\tilde F_{22,\mu\nu}
=(XY)_{\mu\nu}
+\sum_{i=1}^Rz_iH_{i,\mu\nu}(z,X,Y,\xi).
\label{eq_local_hadamard}
\end{align}
The components of $F$ outside the lower-right block are precisely $z_1,\ldots,z_R$.
Therefore, denoting the ideal generated by analytic functions $f_1,\ldots,f_D$ by $\langle f_1, \ldots, f_D \rangle$, we have
\begin{align}
\left\langle (F_{ij})_{i=1,j=1}^{M,N} \right\rangle
=\left\langle z_1,\ldots,z_R, (\tilde F_{22,\mu\nu})_{\mu=1,\nu=1}^{M-a,N-b} \right\rangle
=\left\langle z_1,\ldots,z_R,((XY)_{\mu\nu})_{\mu=1,\nu=1}^{M-a,N-b} \right\rangle.
\end{align}
Moreover, if $a=M$ or $b=N$, then $F_{22}$ is clearly empty.
If $c=0$, then $XY=O$ and hence $\tilde{F}_{22}$ vanishes identically on $S$.
Thus, in either case, we have
\begin{align}
\left\langle (F_{ij})_{i=1,j=1}^{M,N} \right\rangle
=\left\langle z_1,\ldots,z_R \right\rangle.
\end{align}
\end{proof}

\begin{proof}[{\bf Proof of Lemma~\ref{lem_local_reg}}]
The RLCT is the minimum of the local RLCTs over all zeros.
Let $(U_0,V_0)$ be a zero attaining this minimum, and consider $\Phi(U,V)=\lVert F \rVert^2$ in a neighborhood of this point.
By Lemma~\ref{lem_ideal_equal}, there exist analytic local coordinates $(z,X,Y,\xi)$ such that $z \in \mathbb{R}^R$, $X \in \mathrm{M}(M-a,c)$, $Y \in \mathrm{M}(c, N-b)$ and
\begin{align}
\left\langle (F_{ij})_{i=1,j=1}^{M,N} \right\rangle
=\left\langle z_1,\ldots,z_R,(XY)_{\mu\nu}\right\rangle.
\end{align}
When $M=a$, $N=b$, or $c=0$, we write $XY=O$.

Owing to~\cite{SWatanabeBookE}, it follows that $\Phi(U,V)$ has the same RLCT $\lambda_{(U_0,V_0)}$ as
\begin{align}
\lVert z \rVert^2 + \lVert XY \rVert^2 = \sum_{k=1}^R z_k^2 + \sum_{\mu=1}^{M-a} \sum_{\nu=1}^{N-b} (XY)_{\mu\nu}^2.
\end{align}
Furthermore, since $z$ and $XY$ are independent and RLCTs are additive with respect to independent variables~\cite{SWatanabeBookE}, we obtain
\begin{align}
\lambda_{(U_0, V_0)}&=\lambda_{\mathrm{vec}}+\tilde{\lambda}_{\mathrm{MF}} \label{eq_local_reg_exact_rlct}, \\
\lambda_{\mathrm{vec}} &= \frac{R}{2}=\frac{1}{2}\{MN-(M-a)(N-b)\} \label{eq_local_reg_smoothpart_rlct}, \\
\tilde{\lambda}_{\mathrm{MF}} &= \begin{cases}
0 & M=a \textrm{ or } N=b \textrm{ or } c=0, \\
\lambda_{\mathrm{MF}}(M-a, N-b, c, 0) & M>a \textrm{ and } N>b \textrm{ and } c>0,
\end{cases} \label{eq_local_reg_matpart_rlct}
\end{align}
where $\lambda_{\mathrm{MF}}$ is the RLCT of $\lVert XY \rVert^2$ when $M>a$, $N>b$, and $c>0$.
We now determine $\lambda_{(U_0,V_0)}$.

Case (a): Suppose that $M>a$, $N>b$, and $c>0$.
First, since $c=H_0-a-b+r$, we always have
\begin{align}
M+N-a-b-2c = (M-H_0) + (N-H_0) + (a-r) + (b-r) \geqq 0,
\end{align}
that is,
\begin{align}
c \leqq 2c \leqq M+N-a-b. \label{eq_main_not_reach_aoyagi_case4}
\end{align}
Thus, $\lambda_{\mathrm{MF}}(M-a,N-b,c,0)$ is obtained by substituting $I \leftarrow M-a$, $J \leftarrow N-b$, $K \leftarrow c$, and $K_0 \leftarrow 0$ into Theorem~\ref{thm_aoyagi}.
Together with Eqs.~\eqref{eq_local_reg_exact_rlct}--\eqref{eq_local_reg_matpart_rlct}, this yields the following.
\begin{enumerate}
\item If $M-a \leqq N-b+c$ and $N-b \leqq M-a+c$, then
\begin{align}
&\quad I+J-c=M+N-a-b-c, \\
&\quad 2c(I+J)-(I-J)^2-c^2 = 4IJ - (I+J-c)^2
\end{align}
and hence
\begin{align}
&\quad \frac{1}{2}\{MN-(M-a)(N-b)\} +\frac{1}{8}\{2c(M+N-a-b)-(M-N-a+b)^2-c^2\} \\
&= \frac{1}{8}\{4MN-4(M-a)(N-b)+2c(M+N-a-b) - (M-N-a+b)^2-c^2\} \\
&= \frac{1}{8}\{4MN - 4IJ + 2c(I+J)-(I-J)^2-c^2\} \\
&= \frac{1}{8}\{4MN - 4IJ + 4IJ - (I+J-c)^2\} \\
&= \frac{1}{8}\{4MN - (M+N-a-b-c)^2\}.
\end{align}
Therefore, we obtain the following results.
\begin{enumerate}
\item If $M+N-a-b+c$ is even, then
\begin{align}
\lambda_{(U_0,V_0)}&=\frac{1}{8}\{4MN - (M+N-a-b-c)^2\},
\end{align}
\item If $M+N-a-b+c$ is odd, then
\begin{align}
\lambda_{(U_0,V_0)}=\frac{1}{8}\{4MN - (M+N-a-b-c)^2+1\}.
\end{align}
\end{enumerate}
\item If $M-a > N-b+c$, then
\begin{align}
\lambda_{(U_0,V_0)}=\frac{1}{2}\{MN-(N-b)(M-a-c)\}.
\end{align}
\item If $N-b > M-a+c$, then
\begin{align}
\lambda_{(U_0,V_0)}=\frac{1}{2}\{MN-(M-a)(N-b-c)\}.
\end{align}
\item The case $c>M+N-a-b$ cannot occur by Eq.~\eqref{eq_main_not_reach_aoyagi_case4}.
\end{enumerate}

Case (b): Suppose that $M=a$, $N=b$, or $c=0$.
The conditions and Eqs.~\eqref{eq_local_reg_exact_rlct}--\eqref{eq_local_reg_matpart_rlct} give
\begin{align}
\lambda_{(U_0,V_0)}=\frac{R}{2}=\frac{MN-(M-a)(N-b)}{2}.
\end{align}

By the minimality of the RLCT at $(U_0,V_0)$ and $H=H_0$, we have $\lambda_{(U_0,V_0)}=\lambda$.
We show that $H_0(M+N-H_0)/2$ is an upper bound in all cases.
First, consider Case (a).
When $M-a \leqq N-b+c$ and $N-b \leqq M-a+c$, define $\delta=[M+N-a-b-c\textrm{ is odd}]$ using the Iverson bracket.
By the definition of $c$, we have $M+N-a-b-c=M+N-H_0-r$. Setting
\begin{align}
M_0 = M-H_0, \quad N_0=N-H_0, \quad H_1=H_0-r
\end{align}
we have $M_0 \geqq 0$, $N_0 \geqq 0$, and $H_1 \geqq 0$, and
\begin{align}
&\quad (M+N-H_0-r)^2 \nonumber \\
&= (M_0+N_0+H_1)^2 \\
&= H_1^2 +2(M_0+N_0)H_1 +(M_0+N_0)^2 \\
&= H_1^2 +2(M_0-N_0)H_1 +(M_0-N_0)^2 + 4N_0H_1 + 4M_0N_0 \\
&= (H_1+M_0-N_0)^2+ 4N_0H_1 + 4M_0N_0. \label{eq_square_comp_for_H1}
\end{align}
Transposing $4M_0N_0$ in Eq.~\eqref{eq_square_comp_for_H1} and subtracting $\delta$ from both sides gives
\begin{align}
(M_0+N_0+H_1)^2 - 4M_0N_0 -\delta
= (H_1+M_0-N_0)^2 + 4N_0H_1 -\delta. \label{eq_eval_square_term_in_rlct}
\end{align}
Since $M_0+N_0+H_1-(H_1+M_0-N_0)=2N_0$,
$M_0+N_0+H_1$ and $H_1+M_0-N_0$ have the same parity.
Thus, when $\delta=1$, $H_1+M_0-N_0$ is also odd, so its square is at least 1.
Therefore, $(H_1+M_0-N_0)^2-\delta \geqq 0$.
Moreover, since clearly $4N_0H_1 \geqq 0$, Eq.~\eqref{eq_eval_square_term_in_rlct} implies
\begin{align}
&\quad (M_0+N_0+H_1)^2 - 4M_0N_0 -\delta \geqq 0 \\
&\Leftrightarrow (M+N-H_0-r)^2 - 4(M-H_0)(N-H_0) -\delta \geqq 0 \\
&\Leftrightarrow (M+N-H_0-r)^2 -\delta \geqq 4(M-H_0)(N-H_0).
\end{align}
These equivalences give
\begin{align}
4MN-(M+N-a-b-c)^2+\delta &\leqq 4MN-4(M-H_0)(N-H_0) \\
&= 4H_0(M+N-H_0)
\end{align}
and hence
\begin{align}
\lambda \leqq \frac{1}{8}\{4H_0(M+N-H_0)\} = \frac{H_0(M+N-H_0)}{2}.
\end{align}
For the remaining cases, including Case (b),
$a \leqq a+c \leqq H_0$ and $b \leqq b+c \leqq H_0$ imply
\begin{align}
\begin{cases}
MN-(N-b)(M-a-c) \leqq MN-(N-H_0)(M-H_0) = H_0(M+N-H_0), \\
MN-(M-a)(N-b-c) \leqq MN-(N-H_0)(M-H_0) = H_0(M+N-H_0), \\
MN-(M-a)(N-b) \leqq MN-(N-H_0)(M-H_0) = H_0(M+N-H_0).
\end{cases}
\end{align}
Thus, in all cases,
\begin{align}
\lambda \leqq \frac{H_0(M+N-H_0)}{2}.
\end{align}

Finally, suppose that $H_0=r$.
Since $H=H_0$, for any $(U,V)\in\mathcal{Z}$, we have
\begin{align}
H_0=\mathrm{rank}(W_0)=\mathrm{rank}(UV)
\leqq \mathrm{rank}(U)\leqq H_0, \\
H_0=\mathrm{rank}(W_0)=\mathrm{rank}(UV)
\leqq \mathrm{rank}(V)\leqq H_0
\end{align}
and therefore
\begin{align}
\mathrm{rank}(U)=\mathrm{rank}(V)=H_0.
\end{align}
Thus, at each zero,
\begin{align}
a=b=r=H_0, \quad c=H_0-a-b+r=0.
\end{align}
Using this fact and the minimality of the RLCT at $(U_0,V_0)$, Eqs.~\eqref{eq_local_reg_exact_rlct}--\eqref{eq_local_reg_matpart_rlct} yield
\begin{align}
\lambda = \frac{R}{2} = \frac{MN-(M-H_0)(N-H_0)}{2} = \frac{H_0(M+N-H_0)}{2}.
\end{align}
\end{proof}

We now prove the Main Theorem using the preceding lemmas.

\begin{proof}[{\bf Proof of the Main Theorem}]
First, as in previous studies~\cite{nhayashi2,nhayashi8},
we bound $\Phi(U,V)$ from above using the triangle inequality:
\begin{align}
\Phi(U,V)
&= \lVert UV - U_0V_0 \rVert^2 \\
&= \sum_{i=1}^M \sum_{j=1}^N
\left(
u_{i1}v_{1j} + \cdots + u_{iH}v_{Hj}
- u^0_{i1}v^0_{1j} - \cdots - u^0_{iH_0}v^0_{H_0j}
\right)^2 \\
&= \sum_{i=1}^M \sum_{j=1}^N
\left\{
\sum_{k=1}^{H_0} (u_{ik}v_{kj} - u^0_{ik}v^0_{kj})
+ \sum_{k=H_0+1}^H u_{ik}v_{kj}
\right\}^2 \\
&\leqq 2 \sum_{i=1}^M \sum_{j=1}^N
\left\{
\sum_{k=1}^{H_0} (u_{ik}v_{kj} - u^0_{ik}v^0_{kj})
\right\}^2 + 2 \sum_{i=1}^M \sum_{j=1}^N
\left(
\sum_{k=H_0+1}^H u_{ik}v_{kj}
\right)^2 .
\end{align}

For this upper bound, set
\begin{align}
\overline{\Phi}_1(U_{:,1:H_0},V_{1:H_0,:}) &= \sum_{i=1}^M \sum_{j=1}^N
\left\{
\sum_{k=1}^{H_0} (u_{ik}v_{kj} - u^0_{ik}v^0_{kj})
\right\}^2, \\
\overline{\Phi}_2(U_{:,(H_0+1):H},V_{(H_0+1):H,:}) &= \sum_{i=1}^M \sum_{j=1}^N
\left(
\sum_{k=H_0+1}^H u_{ik}v_{kj}
\right)^2, \\
\overline{\Phi}(U,V) &= \overline{\Phi}_1(U_{:,1:H_0}, V_{1:H_0,:}) + \overline{\Phi}_2(U_{:,(H_0+1):H}, V_{(H_0+1):H,:}).
\end{align}
Since multiplication by a positive constant does not change the RLCT~\cite{SWatanabeBookE}, and the RLCT of an upper bound of a function is an upper bound on the RLCT of the original function~\cite{SWatanabeBookE}, the RLCT $\overline{\lambda}$ of $\overline{\Phi}(U,V)$ gives an upper bound on the RLCT $\lambda$ of NMF.
We define the submatrices of $U,V$ as follows:
$U_{:,1:H_0} \in \mathrm{M}(M,H_0,\mathcal{K})$, $U_{:,(H_0+1):H} \in \mathrm{M}(M, H-H_0, \mathcal{K})$, $V_{1:H_0,:} \in \mathrm{M}(H_0, N, \mathcal{K})$, and $V_{(H_0+1):H,:} \in \mathrm{M}(H-H_0, N, \mathcal{K})$ so that
\begin{align}
U&=\overbrace{\begin{bmatrix}
U_{:,1:H_0} & U_{:,(H_0+1):H}
\end{bmatrix}
}^{H_0 \text{ and } H-H_0 \text{ columns}}, \\
V&=\left.\begin{bmatrix}
V_{1:H_0,:} \\
V_{(H_0+1):H,:}
\end{bmatrix}
\right\} H_0 \text{ and } H-H_0 \text{ rows}.
\end{align}
Let $\overline{\lambda}_i$ be the RLCT of $\overline{\Phi}_i$ $(i=1,2)$.
By the definitions of $\overline{\Phi}$, $\overline{\Phi}_i$, and the submatrices,
\begin{align}
\overline{\lambda} = \overline{\lambda}_1+\overline{\lambda}_2
\end{align}
holds~\cite{SWatanabeBookE}.
It therefore suffices to evaluate each term.

For the first term, we have
$\overline{\Phi}_1(U_{:,1:H_0},V_{1:H_0,:})=\lVert U_{:,1:H_0}V_{1:H_0,:}-U_0V_0 \rVert^2$; hence, by Lemma~\ref{lem_local_reg},
\begin{align}
\overline{\lambda}_1 \leqq \frac{H_0(M+N-H_0)}{2}.
\end{align}
On the other hand, for the second term, we have
$\overline{\Phi}_2(U_{:,(H_0+1):H}, V_{(H_0+1):H,:})=\lVert U_{:,(H_0+1):H}V_{(H_0+1):H,:}\rVert^2$; hence, by Lemma~\ref{lem_null},
\begin{align}
\overline{\lambda}_2 = \frac{(H-H_0)\min\{M,N\}}{2}.
\end{align}
Therefore,
\begin{align}
\lambda \leqq \frac{1}{2}\left\{
(H-H_0)\min\{M,N\} +H_0(M+N-H_0)
\right\}.
\end{align}
\end{proof}

The corollaries of the Main Theorem are proved as follows.

\begin{proof}[{\bf Proof of Corollary~\ref{cor_nmf}}]
The result follows by applying the Main Theorem to Eqs.~\eqref{eq_asympF} and \eqref{eq_asympG}.
\end{proof}

\begin{proof}[{\bf Proof of Corollary~\ref{cor_hyper}}]
The result follows by using the case $\varphi(U,V)=\varphi_{\mathrm{Gam}}(U,V)$ of Lemma~\ref{lem_null} for $\overline{\lambda}_2$ in the proof of the Main Theorem.
\end{proof}

\section{Discussion}\label{sec_discuss}
\subsection{Tightness of the derived bound}
First, to examine the degree of improvement in the upper bound, we compare the upper bound $\overline{\lambda}_{\mathrm{prev}}$ obtained in the previous study~\cite{nhayashi5} with the upper bound $\overline{\lambda}$ in the Main Theorem.
According to~\cite{nhayashi5}, if $\delta_{H_0}$ is defined as 1 when $H_0$ is odd and 0 when it is even, then
\begin{align}
\overline{\lambda}_{\mathrm{prev}} = \frac{1}{2}\left\{ (H-H_0)\min\{M,N\} + H_0(M+N-2)+\delta_{H_0} \right\}.
\end{align}
Therefore,
\begin{align}
\overline{\lambda}_{\mathrm{prev}} - \overline{\lambda}
&= \frac{1}{2}\left\{ (H-H_0)\min\{M,N\} + H_0(M+N-2)+\delta_{H_0} \right\} \notag \\
&\quad - \frac{1}{2}\left\{(H-H_0)\min\{M,N\} +H_0(M+N-H_0)\right\} \\
&= \frac{1}{2}\left( H_0^2 -2H_0 + \delta_{H_0} \right) \\
&=
\begin{cases}
\frac{1}{2} (H_0-1)^2 & \text{$H_0$ is odd}, \\
\frac{1}{2} \{(H_0-1)^2 -1\} & \text{$H_0$ is even}.
\end{cases}
\end{align}
When $H_0$ is odd, $\overline{\lambda}_{\mathrm{prev}} \geqq \overline{\lambda}$ clearly holds.
When $H_0$ is even, $\overline{\lambda}_{\mathrm{prev}} \geqq \overline{\lambda}$ also holds if $H_0 \geqq 2$ or $H_0=0$.
Thus, the Main Theorem gives a tighter upper bound than that in the previous study.

\subsection{Rank assumptions on the true parameter matrices}
Following previous studies, the Main Theorem considers the non-negative rank of $U_0V_0$ as information about the true distribution. However, if the ranks of $U_0$, $V_0$, and $U_0V_0$ are all known,
a tighter upper bound can be obtained using Lemma~\ref{lem_local_reg}.
Specifically, for $a=\mathrm{rank}(U_0)$, $b=\mathrm{rank}(V_0)$, and $r=\mathrm{rank}(U_0V_0)$,
setting $M_1=M-a$, $N_1=N-b$, and $c=H_0-a-b+r$ allows $\overline{\lambda}_1$ in the proof of the Main Theorem to be written as
\begin{align}
\overline{\lambda}_1 = \frac{MN-M_1 N_1}{2} + \tilde{\lambda}_{\mathrm{MF}}(M_1,N_1,c,0)
\end{align} 

and hence the RLCT of NMF is bounded as
\begin{align}
\lambda \leqq \frac{1}{2}\left\{ (H-H_0)\min\{M,N\} + MN-M_1 N_1 \right\}+\tilde{\lambda}_{\mathrm{MF}}(M_1,N_1,c,0),
\end{align}
where $\tilde{\lambda}_{\mathrm{MF}}$ is the quantity given by Eq.~\eqref{eq_local_reg_matpart_rlct}.
This inequality gives an even tighter upper bound than the Main Theorem, and equality holds when $H=H_0$ by Lemma~\ref{lem_local_reg}.
That is, specifying the ranks of $U_0$ and $V_0$ yields a tighter upper bound on the RLCT.
In practice, however, model selection for NMF often uses the inner dimension $H$ of the product of the parameter matrices as a control variable, in which case the true value of $H$ is $H_0$.
Indeed, because the singular Bayesian information criterion~\cite{Drton} assumes such a setting, the assumptions of the Main Theorem are natural: they assume the non-negative rank of $U_0V_0$ without specifying the individual ranks of $U_0$ and $V_0$.

\subsection{Relation to the Conjecture on the RLCT of NMF in Previous Work}
Previous work~\cite{nhayashi5} proposed the following conjecture concerning the RLCT $\lambda$ of NMF.
\begin{conj}[Conjecture V.1 in~\cite{nhayashi5}]
Let the non-negative rank and rank of $U_0V_0$ be $H_0$ and $r$, respectively.
\begin{enumerate}
\item If $H_0=r$, the RLCT of {\rm NMF} is equal to that of reduced-rank regression. In particular, if $H=H_0=r$, then $\lambda=r(M+N-r)/2$.
\item If $H_0>r$, the RLCT of {\rm NMF} is strictly greater than that of reduced-rank regression.
\end{enumerate}
\end{conj}
Note that the RLCT of reduced-rank regression is equal to that of matrix factorization in Theorem~\ref{thm_aoyagi}~\cite{Aoyagi1}.
This study partially resolves this conjecture.
Specifically, it establishes the conjectured equality when $H=H_0=r$ and shows that the equality can also hold in some cases with $H=H_0>r$.

By Lemma~\ref{lem_local_reg},
when $H=H_0=r$, we indeed have $\lambda=\lambda_{\mathrm{MF}}(M,N,r,r)$.
On the other hand, when $H=H_0>r$, $\lambda$ is not strictly greater than $\lambda_{\mathrm{MF}}$ in some cases.
We state the claim depending on $r=\mathrm{rank}(U_0V_0)$, $a=\mathrm{rank}(U_0)$ and $b=\mathrm{rank}(V_0)$, as follows.

\begin{prop}\label{prop_nmf_vs_mf_ranksmall}
If $H=H_0>r$, $\lambda \geqq \lambda_{\mathrm{MF}}$ holds.
The following conditions are necessary and sufficient for equality.
\begin{enumerate}
\item If $M-N>H-r$, then
\begin{equation}
  \lambda = \lambda_{\mathrm{MF}}
  \quad\Longleftrightarrow\quad b=r.
\end{equation}
\item If $N-M>H-r$, then
\begin{equation}
  \lambda = \lambda_{\mathrm{MF}}
  \quad\Longleftrightarrow\quad a=r.
\end{equation}
\item If $|M-N|\leqq H-r$, then
\begin{equation}
\lambda = \lambda_{\mathrm{MF}}
\quad\Longleftrightarrow\quad
\begin{cases}
a-r\displaystyle\leq
 \left\lceil\dfrac{M-N+H-r}{2}\right\rceil,\\[8pt]
b-r\displaystyle\leq
 \left\lceil\dfrac{N-M+H-r}{2}\right\rceil,
\end{cases}
\label{eq_balanced-equality}
\end{equation}
\end{enumerate}
where $\lceil \cdot \rceil$ is the ceiling function.
Whenever the applicable equality condition fails, the inequality is strict.
\end{prop}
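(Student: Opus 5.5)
We would prove both the inequality and the equality criteria by one comparison. In the setting $H=H_0$, Lemma~\ref{lem_local_reg} gives $\lambda$ as the local expression
\[
g(a,b)=\frac{R}{2}+\tilde{\lambda}_{\mathrm{MF}}(M-a,N-b,c,0),
\qquad R=MN-(M-a)(N-b),\quad c=H-a-b+r .
\]
Here $(a,b)$ are the ranks at the minimizing zero $(U_0,V_0)$. The proofs of Lemmas~\ref{lem_ideal_equal} and~\ref{lem_local_reg} use only linear algebra and analytic coordinates, never non-negativity. So the same normal form holds at every zero of the unconstrained problem $\lVert UV-W_0\rVert^2$ with $H$ columns and $\mathrm{rank}(W_0)=r$. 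At such a zero the ranks range over the admissible set
\[
\mathcal{A}=\{(a,b): r\leqq a,b\leqq H,\ c=H-a-b+r\geqq0\},
\]
and every pair in $\mathcal{A}$ is realized, for instance by the block standard form~\eqref{eq_simultaneous_standard_form}. Since the RLCT is the minimum of the local RLCTs, this yields
\[
\lambda_{\mathrm{MF}}(M,N,H,r)=\min_{(a,b)\in\mathcal{A}} g(a,b).
\]
The NMF zero set is a subset of the unconstrained one, so $\lambda=g(a,b)\geqq\lambda_{\mathrm{MF}}$ at once. Equality holds if and only if the NMF-optimal pair $(a,b)$ is a minimizer of $g$ on $\mathcal{A}$. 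The remaining work is to determine the set of minimizers of $g$ explicitly.

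To do this we write $g$ in closed form using the case analysis already carried out in the proof of Lemma~\ref{lem_local_reg}.
\begin{itemize}
\item In the balanced sub-case ($M-a\leqq N-b+c$ and $N-b\leqq M-a+c$) we have $M+N-a-b-c=M+N-H-r$. Hence $g=\frac18\{4MN-(M+N-H-r)^2+\delta\}$, which does not depend on $(a,b)$.
\item In the unbalanced sub-cases we have $M-a-c=M-H+b-r$. So $g=\frac12\{MN-(N-b)(M-H+b-r)\}$ depends only on $b$; symmetrically, the other unbalanced sub-case depends only on $a$.
\item In Case (b) ($c=0$, $a=M$ or $b=N$), $g=R/2$.
\end{itemize}
On the Aoyagi side, $H+r\leqq M+N$, so case~4 of Theorem~\ref{thm_aoyagi} never applies. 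Moreover $2(K+K_0)(I+J)-(I-J)^2-(K+K_0)^2=4IJ-(I+J-K-K_0)^2$, so the balanced Aoyagi value is exactly the constant above, with the same parity term $\delta$ because $M+N-a-b+c\equiv M+N-H-r \pmod 2$. The two unbalanced Aoyagi values are $\frac12(HN-Hr+Mr)$ and $\frac12(HM-Hr+Nr)$.

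The proof then splits into the three regimes of the proposition. If $M-N>H-r$, the balanced region of $\mathcal{A}$ is empty or not optimal. We would treat $g$ as a function of $b$ alone, namely $h(b)=\frac12\{MN-(N-b)(M-H+b-r)\}$. Its increment is $h(b+1)-h(b)=\frac12\{(M-H+2b-r+1)-N\}$, which is positive when $M-N>H-r$ and $b\geqq r$. So $h$ is strictly increasing, its minimum is attained only at $b=r$, and $h(r)$ should equal $\frac12(HN-Hr+Mr)$. We also need to check that the Case~(b) and $c=0$ boundary values never undercut $h(r)$ unless $b=r$. The regime $N-M>H-r$ is symmetric and gives $a=r$.

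For $|M-N|\leqq H-r$, the constant balanced value coincides with $\lambda_{\mathrm{MF}}$. We must show that a pair $(a,b)$ lies in the closure of the balanced region, where the one-variable functions reach the constant, exactly when $a-r\leqq\lceil (M-N+H-r)/2\rceil$ and $b-r\leqq\lceil (N-M+H-r)/2\rceil$. Outside this set the unbalanced branch is strictly larger by the same monotonicity computation. The balanced constraint $M-a\leqq N-b+c$ rewrites as $2(b-r)\leqq N-M+H-r+(b-r)-(a-r)$ after substituting $c$, which is where the halved thresholds come from.

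We expect this last regime to be the main obstacle. The difficulty is the parity bookkeeping: the ceilings and the $+1$ parity term must match exactly at the boundary of the balanced region, and the degenerate Case~(b) points ($c=0$, $a=M$, $b=N$) must be shown to fit the same criterion. We would first verify the boundary identities $h(b^\ast)=\frac18\{4MN-(M+N-H-r)^2+\delta\}$ at $b^\ast-r=\lceil (N-M+H-r)/2\rceil$ and at $b^\ast-1$, separately for the two parities. From that we would read off strictness beyond the threshold.
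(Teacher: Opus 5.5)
Your argument is correct and reaches the same criteria, but it is organized differently from the paper's proof. The paper never minimizes over rank types. It first proves a rank-removal identity by substitution into Theorem~\ref{thm_aoyagi},
\[
\lambda_{\mathrm{MF}}(M,N,H,r)=\tfrac{r(M+N-r)}{2}+\ell(M-r,N-r,H-r).
\]
It then rewrites $\lambda$ in the reduced variables $s=a-r$, $t=b-r$ and computes $\Delta=\lambda-\lambda_{\mathrm{MF}}$ in closed form in each branch:
\begin{itemize}
\item[] $\Delta=\frac{1}{2}t\{(M-N)-(H-r)+t\}$ when $M-N>H-r$;
\item[] $\Delta=0$ in the balanced subcase;
\item[] $\Delta=\frac{1}{8}\{(2t-\beta)^2-\varepsilon\}$ when $2t>\beta$, and symmetrically in $s$.
\end{itemize}
Here $\beta=N-M+H-r$ and $\varepsilon$ is the parity of $\beta$. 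Nonnegativity and every equality condition are read off these expressions.

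You instead get $\lambda\geqq\lambda_{\mathrm{MF}}$ structurally: NMF zeros are interior zeros of the unconstrained problem, so $\lambda=g(a,b)\geqq\min_{\mathcal{A}}g$. You then locate the minimizers by observing that $g$ is constant on the balanced region and depends on a single rank in each unbalanced branch, using the first difference $\frac{1}{2}(2t-\beta+1)$. Telescoping that difference gives $h(b)-h(r)=\frac{1}{2}t(t-\beta)$, which is exactly the paper's $\Delta$ in the first regime, so the computational content is the same.

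What each route buys:
\begin{itemize}
\item[] Yours makes the inequality transparent and explains why the criteria are one-sided thresholds.
\item[] The paper's closed form settles in one line the parity bookkeeping you flag as the main obstacle. In the regime $|M-N|\leqq H-r$, with $C$ the balanced constant, $8\{h(b)-C\}=(2t-\beta)^2-\varepsilon$. For $2t>\beta$ this vanishes only when $\beta$ is odd and $2t-\beta=1$.
\end{itemize}
Your identification $\lambda_{\mathrm{MF}}=\min_{\mathcal{A}}g$ tacitly assumes the domain in Definition~\ref{def_mf_rlct} contains zeros of every rank type. This is harmless, because you verify directly that the minimum equals the value in Theorem~\ref{thm_aoyagi}.

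A few points to tidy up:
\begin{itemize}
\item[] \textbf{Balanced constraint.} The constraint $M-a\leqq N-b+c$ is equivalent to $2(b-r)\leqq N-M+H-r$, since $a$ cancels after substituting $c$. The inequality you wrote, with the extra $(b-r)-(a-r)$, is wrong and would not produce the halved thresholds.
\item[] \textbf{Identity at $b^\ast-1$.} It holds only for odd $\beta$; for even $\beta$ one gets $h(b^\ast-1)-C=\frac{1}{2}$. That point lies in the balanced region anyway, so you only need the value at $b^\ast$ plus strict monotonicity for $2t\geqq\beta$, where the increments are at least $\frac12$.
\item[] \textbf{Regime $M-N>H-r$.} State explicitly that $(M-a)-(N-b)-c=(M-N)-(H-r)+2(b-r)>0$ on all of $\mathcal{A}$. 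Both the balanced branch and the other unbalanced branch are therefore empty, not merely ``not optimal''.
\item[] \textbf{Case~(b) points.} They need no separate check. When $c=0$ the branch formula $h(b)=\frac12\{MN-(N-b)(M-H+b-r)\}$ reduces to $R/2$, because $M-H+b-r=M-a$.
\end{itemize}
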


To prove Proposition~\ref{prop_nmf_vs_mf_ranksmall},
we set the preliminaries as follows. 

Introduce the reduced variables:
\begin{align}
M_1 = M-r, \qquad N_1=N-r, \qquad H_1=H-r, \qquad
s=a-r, \qquad t=b-r.
\end{align}
For non-negative integers $I$, $J$, $K$ satisfying $K \leqq \min\{I,J\}$, define
\begin{align}
\ell(I,J,K)=\lambda_{\mathrm{MF}}(I,J,K,0).
\end{align}
Using Theorem~\ref{thm_aoyagi}, we have
\begin{align}
Q(I,J,K) &= 2K(I+J)-(I-J)^2-K^2, \\
\ell(I,J,K) &=
\begin{cases}
    \frac{1}{2}JK & I>J+K, \\
    \frac{1}{2}IK & J>I+K, \\
    \frac{1}{8}\left\{ Q(I,J,K)+\delta_{I,J,K} \right\} & |I-J|\leqq K,
\end{cases}\label{eq_zero-rank-aw}
\end{align}
where
\begin{align}
\delta_{I,J,K} = \begin{cases}
    0,&I+J+K\text{ is even},\\
    1,&I+J+K\text{ is odd}.
  \end{cases}
\end{align}

We use the following lemma to prove Proposition~\ref{prop_nmf_vs_mf_ranksmall}.
\begin{lem}[Rank-removal Identity]\label{lem_rank_removal_identity}
If $H+r \leqq M+N$,
the RLCT of matrix factorization satisfies
\begin{equation}
  \lambda_{\mathrm{MF}}(M,N,H,r)
  =\frac{r(M+N-r)}{2}+\ell(M_1,N_1,H_1).
  \label{eq_rank-removal}
\end{equation}
\end{lem}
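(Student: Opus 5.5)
The plan is a direct case-by-case comparison of the two sides using the explicit formulas of Theorem~\ref{thm_aoyagi}. The left-hand side is $\lambda_{\mathrm{MF}}(I,J,K,K_0)$ with $I=M$, $J=N$, $K=H$, $K_0=r$. The term $\ell(M_1,N_1,H_1)$ on the right is $\lambda_{\mathrm{MF}}(I',J',K',0)$ with $I'=M-r$, $J'=N-r$, $K'=H-r$. The first task is to check that the case distinctions of Theorem~\ref{thm_aoyagi} for these two parameter sets coincide. Indeed,
\begin{align}
M+r>N+H &\iff M-N>H-r \iff M_1>N_1+H_1,\\
N+r>M+H &\iff N_1>M_1+H_1,\\
H+r>M+N &\iff H_1>M_1+N_1 .
\end{align}
Case~4 is excluded on both sides by the hypothesis $H+r\leqq M+N$. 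So the same branch of Theorem~\ref{thm_aoyagi} applies to both sides, and we may compare branch by branch. I will remark that $\ell$ is being evaluated through the $K_0=0$ formulas of Theorem~\ref{thm_aoyagi}, as in Eq.~\eqref{eq_zero-rank-aw}, even if $H_1$ exceeds $\min\{M_1,N_1\}$. This is the one point where I expect some care about the domain of $\ell$ to be needed.

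In the unbalanced case $M_1>N_1+H_1$, the left side is $\frac12(HN-Hr+Mr)$. The right side is
\begin{align}
\frac12\{r(M+N-r)+(N-r)(H-r)\}=\frac12\{rM+NH-rH\},
\end{align}
and the two agree. The case $N_1>M_1+H_1$ is identical with $M$ and $N$ interchanged.

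In the balanced case I will use the identity $2K(I+J)-(I-J)^2-K^2=4IJ-(I+J-K)^2$, already used in the proof of Lemma~\ref{lem_local_reg}. With $K+K_0$ in place of $K$, it gives
\begin{align}
8\lambda_{\mathrm{MF}}(M,N,H,r)&=4MN-(M+N-H-r)^2+\delta,\\
8\ell(M_1,N_1,H_1)&=4M_1N_1-(M_1+N_1-H_1)^2+\delta',
\end{align}
where $\delta,\delta'\in\{0,1\}$ are the parity corrections. Since $M_1+N_1-H_1=M+N-H-r$, the squared terms coincide. Moreover $4MN-4M_1N_1=4r(M+N-r)$.

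The remaining point is the parity correction. The sums $M+N+H+r$ and $M_1+N_1+H_1=M+N+H-3r$ differ by $4r$, so they have the same parity and $\delta=\delta'$. Dividing by $8$ then gives Eq.~\eqref{eq_rank-removal}.

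The main obstacle is bookkeeping rather than mathematics. One must verify that the case boundaries, including the strict versus non-strict inequalities, match exactly. One must also verify that the parity indicator transfers correctly. Both reduce to the simple observations above.
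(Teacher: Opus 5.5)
Your proposal is correct and follows the same route as the paper. You match the branch conditions of Theorem~\ref{thm_aoyagi} under the shift $(M,N,H)\mapsto(M_1,N_1,H_1)$, note that parity is preserved because the sums differ by $4r$, and verify the formula branch by branch. You also supply the balanced-branch computation, via $2K(I+J)-(I-J)^2-K^2=4IJ-(I+J-K)^2$ and $4MN-4M_1N_1=4r(M+N-r)$, which the paper dismisses as ``the same calculation.'' Your remark that $\ell$ should be read directly as $\lambda_{\mathrm{MF}}(\cdot,\cdot,\cdot,0)$ when $H_1>\min\{M_1,N_1\}$ is a fair point that the paper leaves implicit.
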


\begin{proof}[{\bf Proof of Lemma~\ref{lem_rank_removal_identity}}]
This identity follows by direct substitution into Theorem~\ref{thm_aoyagi}.
Indeed, its three relevant branch conditions become
\begin{align}
\begin{aligned}
 M+r>N+H &\iff M_1>N_1+H_1,\\
 N+r>M+H &\iff N_1>M_1+H_1,\\
 M+r\leqq N+H\ \text{and}\ N+r\leqq M+H
   &\iff |M_1-N_1|\leqq H_1.
\end{aligned}    
\end{align}
Note that $H+r\leqq M+N$ is automatic from the assumption.
Since $(M+N+H+r)-(M_1+N_1+H_1)=4r$, the parity is also preserved.
Substitution into each of the three branches yields
Eq.~\eqref{eq_rank-removal}.
In the branch $M_1>N_1+H_1$,
\begin{align}
 \frac{r(M+N-r)}2+\ell(M_1,N_1,H_1)
 =\frac{r(M+N-r)+(H-r)(N-r)}2
 =\frac{HN-Hr+Mr}{2},
\end{align}
which is the corresponding expression in Theorem~\ref{thm_aoyagi}.
The other two branches follow by the same calculation.
\end{proof}

When $H=H_0>r$, $H+r \leqq M+N$ holds because $H=H_0\leq\min\{M,N\}$ and $r\leq\min\{M,N\}$.
Then, on account of Lemma~\ref{lem_local_reg} and Lemma~\ref{lem_rank_removal_identity},
the RLCT of NMF becomes
\begin{align}
\lambda &= \frac{MN-(M-a)(N-b)}{2}
   +\lambda_{\mathrm{MF}}(M-a,N-b,c,0) \\
&=\frac{r(M+N-r)}2
 +\frac{M_1t+N_1s-st}{2}
 +\ell(M_1-s,N_1-t,H_1-s-t).
\label{eq_nmf-reduced}
\end{align}
Consequently, let $\Delta = \lambda - \lambda_{\mathrm{MF}}$.
Then,
\begin{equation}
\Delta=\frac{M_1t+N_1s-st}{2}
 +\ell(M_1-s,N_1-t,H_1-s-t)-\ell(M_1,N_1,H_1).
\label{eq_delta}
\end{equation}
Here, we prove Proposition~\ref{prop_nmf_vs_mf_ranksmall}.

\begin{proof}[{\bf Proof of Proposition~\ref{prop_nmf_vs_mf_ranksmall}}]
Put the reduced triple
$I=M_1-s$, $J=N_1-t$, $K=H_1-s-t$.
First, we show that they are non-negative and satisfy $K \leqq \min\{I,J\}$.
Using $H=H_0$ and Sylvester's rank inequality, we have
\begin{align}
r=\mathrm{rank}(U_0V_0) &\geqq \mathrm{rank}(U_0)+\mathrm{rank}(V_0)-H \\
&=a+b-H.
\end{align}
Hence, we have $a+b \leqq H+r$.
Subtracting $2r$ from both sides,
\begin{align}
a+b-2r \leqq H-r=H_1.   
\end{align}
Since $a \geqq r$ and $b \geqq r$, $a+b-2r \geqq 0$, i.e.,$H_1-(a+b-2r) \geqq 0$.
By the definitions, $s+t=a+b-2r$. Then, we have
\begin{align}
K = H_1 -s-t \geqq 0.
\end{align}
Besides, $I-K=M_1-H_1+t \geqq 0$ and $J-K=N_1-H_1+s \geqq 0$ since $M \geqq H_0$, $N \geqq H_0$, $s \geqq 0$ and $t \geqq 0$.

Here, we evaluate \eqref{eq_delta} in all branches of Eq.~\eqref{eq_zero-rank-aw}.

\medskip
\noindent\textbf{Case 1: $M_1>N_1+H_1$, i.e., $M-N>H-r$.}
From the condition and $t \geqq 0$, 
\begin{align}
I-(J+K) &= M_1-N_1-H_1+2t \\
&\geqq M_1-N_1-H_1+t>0    
\end{align}
holds, both $\ell(M_1,N_1,H_1)$ and $\ell(I,J,K)$ are in the first branch of Eq.~\eqref{eq_zero-rank-aw}.
Substitution into Eq.~\eqref{eq_delta} gives
\begin{equation}
 \Delta
 =\frac{t(M_1-N_1-H_1+t)}{2}.
 \label{eq_delta-case1}
\end{equation}
Because of $M_1-N_1-H_1+t>0$ and $t \geqq 0$,
$\Delta$ is non-negative and $\Delta=0$ if and only if $t=0$, or equivalently $b=r$.

\medskip
\noindent\textbf{Case 2: $N_1>M_1+H_1$, i.e., $N-M>H-r$.}
In the same way as Case 1, we have
\begin{equation}
 \Delta
 =\frac{s(N_1-M_1-H_1+s)}{2}.
 \label{eq_delta-case2}
\end{equation}
$\Delta$ is non-negative and $\Delta=0$ if and only if $s=0$, or equivalently $a=r$.

\medskip
\noindent\textbf{Case 3: $|M_1-N_1|\leqq H_1$, i.e., $|M-N|\leqq H-r$.}
Set $\alpha=H_1+M_1-N_1$ and $\beta=H_1-M_1+N_1$.
From the conditions, $\alpha\geqq 0$ and $\beta \geqq 0$ hold.
Besides, $\alpha$ and $\beta$ satisfy $\alpha+\beta=2H_1$ and have the same parity.
Let
\begin{align}
\varepsilon=
 \begin{cases}
 0,&\alpha\text{ and }\beta\text{ are even},\\
 1,&\alpha\text{ and }\beta\text{ are odd}.
 \end{cases}
\end{align}
This is equal to $\delta_{M_1,N_1,H_1}$ in $\ell(M_1,N_1,H_1)$ because of
$M_1+N_1+H_1\equiv H_1+M_1-N_1 = \alpha \pmod 2$.

The two quantities $J-(I+K)$ and $I-(J+K)$ for the reduced triple satisfy
\begin{align}
 J-(I+K)&=2s-\alpha,\label{eq_reduced-imbalance-s}\\
 I-(J+K)&=2t-\beta.\label{eq_reduced-imbalance-t}
\end{align}
Moreover, since $s \geqq 0$, $t \geqq 0$ and $s+t \leqq H_1$, we have
\begin{equation}
 (2s-\alpha)+(2t-\beta)=2(s+t-H_1)\leqq 0,
 \label{eq_not-both-positive}
\end{equation}
so these two quantities in Eqs.~\eqref{eq_reduced-imbalance-s} and
\eqref{eq_reduced-imbalance-t} cannot both be positive simultaneously.
Then, the following three subcases are therefore exhaustive.

\medskip
\noindent\textbf{Case 3-1: $2s \leqq \alpha$ and $2t \leqq \beta$, i.e., $J \leqq I+K$ and $I \leqq J+K$.}
Because of the conditions and $K \geqq 0$, we have $|I-J| \leqq K$.
The parity term is still $\varepsilon$, owing to $I+J+K=M_1+N_1+H_1-2s-2t$.
Direct substitution into Eq.\eqref{eq_delta} gives
\begin{align}
\Delta &= \frac{M_1t+N_1s-st}{2}+\frac{Q(I,J,K)-Q(M_1,N_1,H_1)}{8}.\label{eq_delta-balbal-intermediate}
\end{align}
Developing the numerator of the second term, 
\begin{align}
&\quad Q(I,J,K)-Q(M_1,N_1,H_1) \nonumber \\
&= 2(H_1-s-t)(M_1+N_1-s-t)  -(M_1-N_1-s+t)^2-(H_1-s-t)^2 \nonumber \\
&\qquad - 2H_1(M_1+N_1)+(M_1-N_1)^2+H_1^2 \\
&= 2H_1(M_1+N_1)-2(s+t)(M_1+N_1)-2H_1(s+t)+2(s+t)^2 \nonumber \\
&\qquad +(2M_1-2N_1-s+t)(s-t)+(2H_1-s-t)(s+t) -2H_1(M_1+N_1) \\
&= -2(s+t)(M_1+N_1)-2H_1(s+t)+2(s+t)^2 \nonumber \\
&\qquad +2(M_1-N_1)(s-t)-(s-t)^2 + 2H_1(s+t) -(s+t)^2 \\
&= -4M_1t-4N_1s + (s+t)^2-(s-t)^2 \\
&= -4(M_1t+N_1s-st). \label{eq_qq-balbal}
\end{align}
Applying Eq.~\eqref{eq_qq-balbal} to Eq.~\eqref{eq_delta-balbal-intermediate}, we have
\begin{align}
\Delta = \frac{4(M_1t+N_1s-st)-4(M_1t+N_1s-st)}{8}=0.
 \label{eq_delta-balbal}
\end{align}

\medskip
\noindent\textbf{Case 3-2: $2s > \alpha$.}
We have $J>I+K$ by using Eq.~\eqref{eq_reduced-imbalance-s}; thus, the second branch of
Eq.~\eqref{eq_zero-rank-aw} applies to the reduced triple.  Direct expansion gives
\begin{align}
8\Delta &= 4(M_1t+N_1s-st)+4IK-Q(M_1,N_1,H_1)-\varepsilon \\
&= 4M_1t+4N_1s-4st+4(M_1-s)(H_1-s-t)  - 2H_1(M_1+N_1)+(M_1-N_1)^2+H_1^2 -\varepsilon  \\
&=  4N_1s + 2M_1H_1 -4M_1s -4H_1s +4s^2 -2N_1H_1 +(M_1-N_1)^2+H_1^2 -\varepsilon \\
&=  4s^2 -4(H_1+M_1-N_1)s + (M_1-N_1)^2+2H_1(M_1-N_1)+H_1^2 - \varepsilon \\
&= 4s^2 -4\alpha s + \alpha^2 - \varepsilon \\
&= (2s - \alpha)^2 - \varepsilon.
\end{align}
Hence, we have
\begin{equation}
 \Delta=\frac{(2s-\alpha)^2-\varepsilon}{8}.
 \label{eq_delta-balanced-s}
\end{equation}
If $\varepsilon=0$, the positive integer $2s-\alpha$ is even and hence at least $2$, i.e., $\Delta>0$.
If
$\varepsilon=1$, it is a positive odd integer, and
Eq.~\eqref{eq_delta-balanced-s} becomes 0 if and only if $2s-\alpha=1$, i.e.,
\begin{align}
s=\frac{\alpha+1}{2}=\left\lceil\frac{\alpha}{2}\right\rceil.    
\end{align}

\medskip
\noindent\textbf{Case 3-3: $2t > \beta$.}
A calculation symmetric to that in Case 3-2 gives
\begin{equation}
 \Delta=\frac{(2t-\beta)^2-\varepsilon}{8},
 \label{eq_delta-balanced-t}
\end{equation}
which vanishes in this subcase if and only if $\varepsilon=1$ and
\begin{align}
t=\frac{\beta+1}{2}=\left\lceil\frac{\beta}{2}\right\rceil.    
\end{align}

Combining the three subcases in Case 3, the equality holds exactly when
\begin{equation}
 s\leqq\left\lceil\frac{\alpha}{2}\right\rceil,
 \qquad
 t\leqq\left\lceil\frac{\beta}{2}\right\rceil.
 \label{eq_balanced-pq-condition}
\end{equation}
Indeed, when $\varepsilon=0$, these are precisely the balanced-subcase conditions.
When $\varepsilon=1$, feasibility $s+t\leqq H_1$ prevents both
$s=(\alpha+1)/2$ and $t=(\beta+1)/2$, since their sum would be $H_1+1$.
Thus, Eq.~\eqref{eq_balanced-pq-condition} consists of the balanced subcase plus the two one-step boundary cases in which
Eqs. \eqref{eq_delta-balanced-s} or \eqref{eq_delta-balanced-t} vanishes.
Finally, substituting
$s=a-r$, $t=b-r$, $\alpha=H-r+M-N$ and $\beta=H-r-M+N$
into Eq.~\eqref{eq_balanced-pq-condition} gives
Eq.~\eqref{eq_balanced-equality}.

The consideration of the above three cases also proves $\lambda \geqq \lambda_{\mathrm{MF}}$ and its strictness in every case.
\end{proof}

\section{\label{sec_conc}Conclusion}

In this study, we established a tighter upper bound on the RLCT of NMF than that obtained in previous work.
Furthermore, when the model and the true distribution are equal, we obtained the exact value of the RLCT over a broader range than was previously known.
The future directions are to determine the exact values when the model is redundant relative to the true distribution and to conduct large-scale numerical experiments.

\section*{Declaration of generative AI and AI-assisted technologies in the manuscript preparation process}
During the preparation of this work, the authors used OpenAI's ChatGPT as an auxiliary tool during the development of some proofs in this work, primarily for exploring possible lines of argument.
After using this tool,
the authors reviewed and edited the content as needed and take full responsibility for the content of the published article.

\bibliographystyle{plain}

\begin{thebibliography}{10}
\bibitem{Aoyagi1}
Miki Aoyagi and Sumio Watanabe.
\newblock Stochastic complexities of reduced rank regression in bayesian
  estimation.
\newblock {\em Neural Networks}, 18(7):924--933, 2005.

\bibitem{Bobadilla2018recommender}
Jes{\'u}s Bobadilla, Rodolfo Bojorque, Antonio~Hernando Esteban, and Remigio
  Hurtado.
\newblock Recommender systems clustering using bayesian non negative matrix
  factorization.
\newblock {\em IEEE Access}, 6:3549--3564, 2018.

\bibitem{Cemgil}
Ali~T. Cemgil.
\newblock Bayesian inference in non-negative matrix factorisation models.
\newblock {\em Computational Intelligence and Neuroscience}, 2009(4):17, 2009.
\newblock Article ID 785152.

\bibitem{Cohen}
Joel~E. Cohen and Uriel~G. Rothblum.
\newblock Nonnegative ranks, decompositions, and factorizations of nonnegative
  matrices.
\newblock {\em Linear Algebra and Its Applications}, 190:149--168, 1993.

\bibitem{Drton}
Mathias Drton and Martyn Plummer.
\newblock A bayesian information criterion for singular models.
\newblock {\em Journal of the Royal Statistical Society Series B}, 79:323--380,
  2017.
\newblock with discussion.

\bibitem{Finesso}
Lorenzo Finesso and Peter Spreij.
\newblock Nonnegative matrix factorization and i-divergence alternating
  minimization.
\newblock {\em Linear Algebra and its Applications}, 416(2-3):270--287, 2006.

\bibitem{nhayashi8}
Naoki Hayashi.
\newblock Variational approximation error in non-negative matrix factorization.
\newblock {\em Neural Networks}, 126:65--75, 2020.

\bibitem{nhayashi9}
Naoki Hayashi.
\newblock The exact asymptotic form of bayesian generalization error in latent
  dirichlet allocation.
\newblock {\em Neural Networks}, 137:127--137, 2021.

\bibitem{nhayashi5}
Naoki Hayashi and Sumio Watanabe.
\newblock Tighter upper bound of real log canonical threshold of non-negative
  matrix factorization and its application to bayesian inference.
\newblock In {\em IEEE Symposium Series on Computational Intelligence (IEEE
  SSCI)}, pages 718--725, 11 2017.

\bibitem{nhayashi2}
Naoki Hayashi and Sumio Watanabe.
\newblock Upper bound of bayesian generalization error in non-negative matrix
  factorization.
\newblock {\em Neurocomputing}, 266C(29 November):21--28, 2017.

\bibitem{Itakura}
F.~Itakura and S.~Saito.
\newblock Analysis synthesis telephony based on the maximum likelihood method.
\newblock In {\em Proc. 6th of the International Congress on Acoustics},
  1968.

\bibitem{Kim}
Hyunsoo Kim and Haesun Park.
\newblock Sparse non-negative matrix factorizations via alternating
  non-negativity-constrained least squares for microarray data analysis.
\newblock {\em Bioinformatics}, 23(12):1495--1502, 2007.
\newblock doi:10.1093/bioinformatics/btm134. PMID 17483501.

\bibitem{Kohjima}
Masahiro Kohjima, Tatsushi Matsubayashi, and Hiroshi Sawada.
\newblock Probabilistic non-negative inconsistent-resolution matrices
  factorization.
\newblock In {\em Proceeding of CIKM '15 Proceedings of the 24th ACM
  International on Conference on Information and Knowledge Management},
  volume~1, pages 1855--1858, 2015.

\bibitem{Kohjima2017phase}
Masahiro Kohjima and Sumio Watanabe.
\newblock Phase transition structure of variational bayesian nonnegative matrix
  factorization.
\newblock In {\em International Conference on Artificial Neural Networks},
  pages 146--154. Springer, 2017.

\bibitem{Lee}
Daniel~D. Lee and H.~Sebastian Seung.
\newblock Learning the parts of objects with nonnegative matrix factorization.
\newblock {\em Nature}, 401:788--791, 1999.

\bibitem{StatRethinkMcElreath2nd}
Richard McElreath.
\newblock {\em Statistical Rethinking: A Bayesian Course with Examples in R and
  Stan}.
\newblock CRC Press, 2nd edition, 2020.

\bibitem{Nagata2008asymptotic}
Kenji Nagata and Sumio Watanabe.
\newblock Asymptotic behavior of exchange ratio in exchange monte carlo method.
\newblock {\em Neural Networks}, 21(7):980--988, 2008.

\bibitem{Paatero}
Pentti Paatero and Unto Tapper.
\newblock Positive matrix factorization: A non-negative factor model with
  optimal utilization of error estimates of data values.
\newblock {\em Environmetrics}, 5(2):111--126, 1994.
\newblock doi:10.1002/env.3170050203.

\bibitem{Takeuchi2013NM2F}
Koh Takeuchi, Katsuhiko Ishiguro, Akisato Kimura, and Hiroshi Sawada.
\newblock Non-negative multiple matrix factorization.
\newblock In {\em IJCAI}, volume~13, pages 1713--1720, 2013.

\bibitem{Virtanen2008bayesian}
Tuomas Virtanen, A~Taylan Cemgil, and Simon Godsill.
\newblock Bayesian extensions to non-negative matrix factorisation for audio
  signal modelling.
\newblock In {\em Acoustics, Speech and Signal Processing, 2008. ICASSP 2008.
  IEEE International Conference on}, pages 1825--1828. IEEE, 2008.

\bibitem{Watanabe2}
Sumio Watanabe.
\newblock Algebraic geometrical methods for hierarchical learning machines.
\newblock {\em Neural Networks}, 13(4):1049--1060, 2001.

\bibitem{Watanabe2007almost}
Sumio Watanabe.
\newblock Almost all learning machines are singular.
\newblock In {\em 2007 IEEE Symposium on Foundations of Computational
  Intelligence}, pages 383--388. IEEE, 2007.

\bibitem{SWatanabeBookE}
Sumio Watanabe.
\newblock {\em Algebraic Geometry and Statistical Learning Theory}.
\newblock Cambridge University Press, 2009.

\bibitem{SWatanabeBookMath}
Sumio Watanabe.
\newblock {\em Mathematical theory of Bayesian statistics}.
\newblock CRC Press, 2018.

\bibitem{Xu}
W.~Xu, X.~Liu, and Y.~Gong.
\newblock Document clustering based on non-negative matrix factorization.
\newblock In {\em Proceedings of the 26th annual international ACM SIGIR
  conference on Research and development in information retrieval. Association
  for Computing Machinery}, pages 267--273, 2003.

\bibitem{Yamazaki2013comparing}
Keisuke Yamazaki and Daisuke Kaji.
\newblock Comparing two bayes methods based on the free energy functions in
  bernoulli mixtures.
\newblock {\em Neural Networks}, 44:36--43, 2013.

\bibitem{Yamazaki1}
Keisuke Yamazaki and Sumio Watanabe.
\newblock Singularities in mixture models and upper bounds of stochastic
  complexity.
\newblock {\em Neural Networks}, 16(7):1029--1038, 2003.

\bibitem{Zwiernik2011asymptotic}
Piotr Zwiernik.
\newblock An asymptotic behaviour of the marginal likelihood for general markov
  models.
\newblock {\em Journal of Machine Learning Research}, 12(Nov):3283--3310, 2011.
\end{thebibliography}

\end{document}